\documentclass[letterpaper]{article} % DO NOT CHANGE THIS

\makeatletter
\let\AAAIOriginalAddContentsLine\addcontentsline
\makeatother

\usepackage[preprint]{aaai2027}  % DO NOT CHANGE THIS
\usepackage{amsthm}

\usepackage{times}  % DO NOT CHANGE THIS
\usepackage{helvet}  % DO NOT CHANGE THIS
\usepackage{courier}  % DO NOT CHANGE THIS
\usepackage[hyphens]{url}  % DO NOT CHANGE THIS
\usepackage{titletoc}

\usepackage{graphicx} % DO NOT CHANGE THIS
\usepackage{natbib}  % DO NOT CHANGE THIS AND DO NOT ADD ANY OPTIONS TO IT
\usepackage{caption} % DO NOT CHANGE THIS AND DO NOT ADD ANY OPTIONS TO IT
\usepackage{booktabs} % commands to create good-looking tables
\usepackage[misc]{ifsym}
\usepackage{tikz} % nice language for creating drawings
\usepackage{mathtools}
\usepackage{amssymb}

\usepackage{algorithm}
\usepackage{algorithmic}
\usepackage{tabularx}

\usepackage{newfloat}
\usepackage{listings}
\DeclareCaptionStyle{ruled}{labelfont=normalfont,labelsep=colon,strut=off} % DO NOT CHANGE THIS
\floatstyle{ruled}
\newfloat{listing}{tb}{lst}{}
\floatname{listing}{Listing}

\newcommand{\x}[1]{\mathbf{x} #1}
\newcommand{\y}[1]{\mathbf{y} #1}
\newcommand{\z}[1]{\mathbf{z} #1}
\newcommand{\btheta}[1]{\mathbf{\theta} #1}
\newcommand{\ubar}[1]{\text{\b{$#1$}}}
\DeclareMathOperator{\dom}{dom}
\DeclareMathOperator{\dist}{dist}
\DeclareMathOperator{\prox}{prox}
\DeclareMathOperator*{\argmin}{argmin}
\DeclareMathOperator*{\minimize}{minimize}
\newcommand{\indicator}{\mathbf{1}}
\newcommand{\papertablesize}{\small}
\newcommand{\papertablecaption}[1]{\samepage\captionof{table}{#1}}
\newenvironment{papertableblock}
  {\par\begin{center}\begin{minipage}{\textwidth}\centering}
  {\end{minipage}\end{center}}
\newcommand{\iidresultsfigure}{%
  \includegraphics[width=0.9\linewidth]{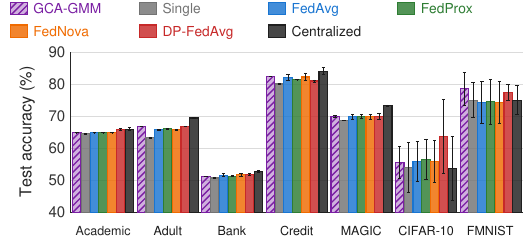}}
\newcommand{\leakageNTMSEFigure}{%
  \includegraphics[width=0.9\linewidth]{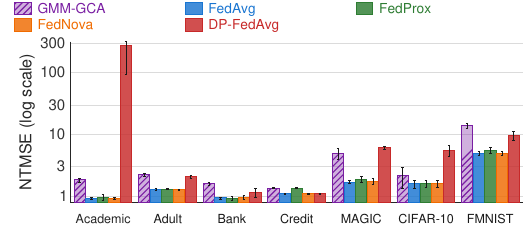}}
\newcommand{\leakageCosineFigure}{%
  \includegraphics[width=0.9\linewidth]{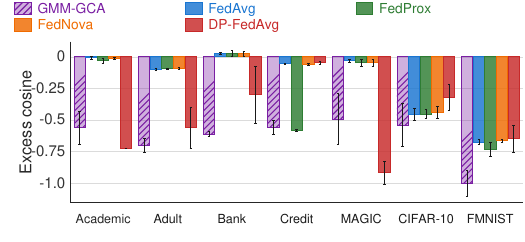}}
\newtheorem{proposition}{Proposition}
\newtheorem{theorem}{Theorem}
\newtheorem{lemma}{Lemma}

\title{GCA: Global Centroid Alignment in Federated Learning}
\author{
    \textbf{Jong-Ik Park}\textsuperscript{\rm 1},
    Harry Jiang\textsuperscript{\rm 1},
    Logan Blakely\textsuperscript{\rm 2},
    Georgios Fragkos\textsuperscript{\rm 2},
    Shamina Hossain-McKenzie\textsuperscript{\rm 2},
    Carlee Joe-Wong\textsuperscript{\rm 1}
}

\affiliations{
    \textsuperscript{\rm 1}Carnegie Mellon University\\
    \textsuperscript{\rm 2}Sandia National Laboratories
}

\begin{document}

\maketitle

\begin{abstract}
Autoencoder (AE)-based federated learning (FL) is attractive for anomaly detection when clients have limited local data. However, conventional FL exchanges AE parameters or gradients, incurring substantial communication overhead and potentially exposing input training data information, since AEs are explicitly optimized to reconstruct their inputs.
We introduce \emph{Global Centroid Alignment (GCA)}, a latent-code-mediated FL protocol that coordinates clients without transmitting AE parameters or gradients. In each round,
(1) clients first train their local AEs using a \emph{reconstruction} update and upload a small subset of encoder latent codes to the FL server.
(2) The server pools these codes, fits a clustering model, and broadcasts only \emph{global latent centroids and their support counts}.
(3) Each client then updates its encoder by aligning its local latent codes with the \emph{nearest} centroid using \emph{inverse-count} weighting to emphasize globally underrepresented patterns.
Steps (1)--(3) repeat over communication rounds.
Because GCA exchanges only sampled latent codes and centroid statistics, its communication cost depends on latent dimensionality and the numbers of uploaded codes and returned centroids rather than on AE model size. Across five tabular and two vision benchmarks, GCA yields higher reconstruction error under a server-side client data extraction attack in all 21 comparisons and clearly lower cosine similarity in 20 of 21 comparisons with FedAvg, FedProx, and FedNova, showing its ability to protect training data. It even improves test accuracy over FedAvg by up to $5.76\%$. GCA achieves extraction defense comparable to DP-FedAvg, remains effective when DP-FedAvg does not reduce target resemblance, and lowers per-round communication by up to $99.15\%$.
\end{abstract}

\section{Introduction}
\label{sec:intro}
Federated learning (FL) enables multiple clients to train their own \emph{globally informed} models \emph{without moving their data} to another location~\cite{mcmahan2017communication, li2020federated, park2024federated}.
Each client updates its model locally and sends an update (e.g., gradients or parameter changes) to a coordinating server, which aggregates these updates to form a global model and broadcasts the global model back; clients then continue local adaptation~\cite{konecny2016federated}. This process repeats until convergence (or for a fixed number of rounds)~\cite{li2020review, zhang2021survey}.
Because raw data never leaves client devices, FL reduces data movement compared to centralized training and limits exposure of sensitive records, improving privacy by design~\cite{li2021survey, thapa2021advancements}. As a result, it is widely deployed in healthcare, finance, mobile keyboards, and industrial IoT, and is especially valuable in \emph{data-scarce} and \emph{data-sensitive} settings where cross-client structure can be exploited without sharing raw data~\cite{long2020federated, rieke2020future}.

A common task in these applications is that of detecting anomalous data, e.g., device telemetry, medical monitoring, predicting faults in industrial IoT settings, and fraud/abuse detection---where the learning task is to detect rare yet high-impact anomalies with low false-alarm rates~\cite{kea2023enhancing, novoa2023fast, shrestha2024anomaly}. %predicting future faults in industrial IoT settings~\cite{ma2025fedsum, khowaja2025selffed}. 
For such tasks, FL commonly employs reconstruction models such as \emph{autoencoders (AEs)}~\cite{jiang2022privacy, vucovich2023anomaly}. 
%
% These settings often have few labeled examples, noisy labels, and \emph{imbalanced} class distributions~\cite{ma2025fedsum, khowaja2025selffed}. In such cases, the objective shifts to modeling ``normal'' behavior and flagging deviations, so FL commonly employs reconstruction models such as \emph{autoencoders (AEs)}~\cite{jiang2022privacy, vucovich2023anomaly}. 
%
To do so, each client trains an AE on its own data (which is assumed to be normal, i.e., non-anomalous) and, in standard practice, uploads parameters or gradients of each locally trained AE model to a central server for aggregation. %This pattern is pervasive in \emph{anomaly detection}---
However, AE-based parameter sharing introduces \textbf{communication} and \textbf{unique privacy challenges}.

For \textbf{communication,} repeatedly \emph{uploading full AE parameters is costly} for low-spec devices (e.g., microcontrollers with limited RAM and sub-Mbps uplinks) and for edge deployments (e.g., gateways with narrow, skewed traffic)~\cite{singh2019detailed, gao2020end}.
On \textbf{privacy,} because an AE is optimized to \emph{reproduce its inputs}, its parameters (and even gradients) can encode fine-grained signatures of local samples; a curious or compromised server may exploit these signals to approximate training records~\cite{suganuma2018exploiting,chen2023auto,ghoshal2025reverse}.
Moreover, the encoder–decoder model often collapses onto a narrow manifold that closely interpolates the training set; as a result, decoding arbitrary latent vectors---or even passing random inputs through the AE---tends to yield outputs resembling the training data~\cite{steck2020autoencoders}. The risk is amplified when a client has few examples (e.g., a small clinic with only a few hundred normal patient records per week), where overfitting makes the autoencoder mapping highly specific to the observed data~\cite{chen2017outlier}.

We therefore propose \textbf{Global Centroid Alignment (GCA)}, an FL protocol for \underline{AE-based anomaly detection} that (i) never shares raw data, parameters, or gradients; (ii) transfers sampled latent codes and centroid statistics; and (iii) remains stable on normals-only data.
\textbf{Reconstruction.} GCA begins with a local training: clients train as in standard AE learning, jointly updating the encoder and decoder so that the decoder's output closely matches the input~\cite{tschannen2018recent}. Clients then transmit only a subset of \emph{latent codes} from their training data (not parameters or raw samples).
\textbf{Aggregation.} The server performs \emph{aggregation} by fitting a lightweight clustering model (e.g., a Gaussian mixture~\cite{mclachlan2014number}) on the pooled latents, and broadcasts only the resulting \emph{global latent centroids} and their \emph{support counts}, not parameters or gradients, to the clients.
\textbf{Alignment.} Finally, on the client side, an \emph{alignment} step pulls encoder outputs (latent codes) toward the \emph{nearest} latent centroid using an \emph{inverse-count} weight, which emphasizes globally underrepresented patterns and discourages collapse to dominant modes~\cite{anand2010approach, mohammed2020machine}.
 \\
%
%
% This reconstruction$\rightarrow$aggregation$\rightarrow$alignment schedule first allows reconstruction to adapt locally and then injects cross-client structure, reducing overfitting to site-specific idiosyncrasies. Inverse-count weighting in the alignment step further improves coverage of rare or previously unseen modes~\cite{anand2010approach, chang2013oversampling, li2020analyzing}. Because clients send only a fraction of compressed latents and the server returns a small set of prototype means (and counts), per-round communication scales with the number of prototypes rather than model size; the shared information consists of aggregated statistics, not parameters---lowering privacy exposure, especially for large AEs~\cite{yamazaki2022deep, jia2024generative}.
\indent Our key \textbf{contributions} in this paper consist of: \\
$\bullet$ \textbf{GCA.} A communication-efficient, latent-code-mediated FL method in which clients and a server exchange sampled latent codes, global centroids, and support counts instead of model parameters or gradients. \\
$\bullet$ \textbf{Empirical evaluation.} Across five tabular and two vision benchmarks, GCA achieves up to $5.76\%$ higher test accuracy than FedAvg and comparable accuracy to FedProx~\cite{li2020federated}, FedNova~\cite{wang2020tackling}, and DP-FedAvg~\cite{abadi2016deep,mcmahan2018learning}. GCA also provides stronger extraction defense than FedAvg, FedProx, and FedNova, and more consistent defense than DP-FedAvg, which exhibits failure cases in certain settings, while reducing per-round communication by $84.14\%$--$99.15\%$. \\
$\bullet$ \textbf{Theoretical analysis.} We analyze the round-to-round stability of GCA's reconstruction and alignment updates and show that, under our threat model, the server's latent-only observations do not uniquely determine the training records.

%, while reducing communication by up to \textit{zzz}$\times$. 

In the following sections, the \emph{Related Work} reviews prior studies, the \emph{Methodology} introduces GCA, and the \emph{Adversarial Attack Scenarios} describes the considered threat models. The \emph{Experimental Evaluation} presents the experimental results and compares GCA's communication efficiency with standard FL methods. Finally, the \emph{Conclusion} summarizes our findings and outlines directions for future work.

%, and offer theoretical support for our design in Section~\ref{sec: analysis}

\section{Related Work}
\label{sec:related}
%\paragraph{Autoencoders for unsupervised anomaly detection.}
An autoencoder consists of an encoder $E:\mathbb{R}^{d_x}\to\mathbb{R}^{d_z}$ and a decoder $D:\mathbb{R}^{d_z}\to\mathbb{R}^{d_x}$ trained to reconstruct inputs $\mathbf{x}$ by minimizing a reconstruction loss on \emph{normal} data~\cite{jiang2022privacy, vucovich2023anomaly}.
After training, a sample is scored by its reconstruction error $s(\x)=\|\x-D(E(\x))\|^2$, with the assumption that anomalies are poorly reconstructed and thus receive higher scores. In practice, anomaly decisions are often made by thresholding using a percentile of the \emph{training} (or validation) error distribution to control false positives without labels. Common choices such as the 75th or 95th percentiles provide conservative cutoffs under class imbalance and concept drift~\cite{ibidunmoye2017adaptive, van2021anomaly, kiet2025statistical}.
Recent work also suggests that anomaly detection can benefit from \emph{explicitly structuring} the latent space beyond reconstruction alone (e.g., encouraging normal embeddings to concentrate in a compact region~\cite{zong2018deep,zhou2021vae}).
This paradigm is attractive when data and labels are scarce or unavailable, and has been applied broadly to tabular, time-series, and vision data~\cite{cheng2021improved, ma2025fedsum, khowaja2025selffed}.

%\paragraph{AE-based FL applications.}
To collaborate without sharing raw data, clients using AEs can adopt FL with trusted partners while keeping data local, thereby avoiding the costs and risks of centralizing all records~\cite{kea2023enhancing, novoa2023fast, shrestha2024anomaly}. In conventional (e.g., FedAvg-style~\cite{mcmahan2017communication, li2020federated}) FL, clients train local AEs and periodically share model parameters or gradients for aggregation.
%\paragraph{Limitations: communication overhead and privacy risk.}
However, sharing AE parameters or gradients can incur substantial communication overhead~\cite{singh2019detailed,gao2020end} and still risks exposing private information~\cite{suganuma2018exploiting,steck2020autoencoders,chen2023auto,ghoshal2025reverse}. 

% \paragraph{Positioning of GCA.}
Differentially private FL limits the influence of individual training records by clipping and perturbing model updates, but it does not directly remove an AE's underlying objective of reconstructing its inputs~\cite{abadi2016deep,mcmahan2018learning}. \emph{GCA instead avoids sharing even perturbed reconstructive AE parameters or gradients.}
GCA may also appear similar to  federated knowledge-sharing methods that exchange class-conditioned logits, soft labels, or feature prototypes (e.g., FedProto)~\cite{sattler2020communication,tan2022fedproto}. These methods rely on predefined class semantics, whereas \emph{GCA operates on unlabeled normal data, discovers global latent groups from uploaded latent codes,} and aligns local representations with the resulting centroids.
Appendix~\ref{app:gca-vs-prototype-fl} and Table~\ref{tab:gca-vs-prototype-fl} elaborate GCA's distinction from class-prototype FL and explain why withholding an AE reconstruction map as in GCA is complementary to, rather than a replacement for, differential privacy.

\section{Methodology}\label{sec:method}
\begin{figure*}
    \centering
    \includegraphics[width=0.75\linewidth]{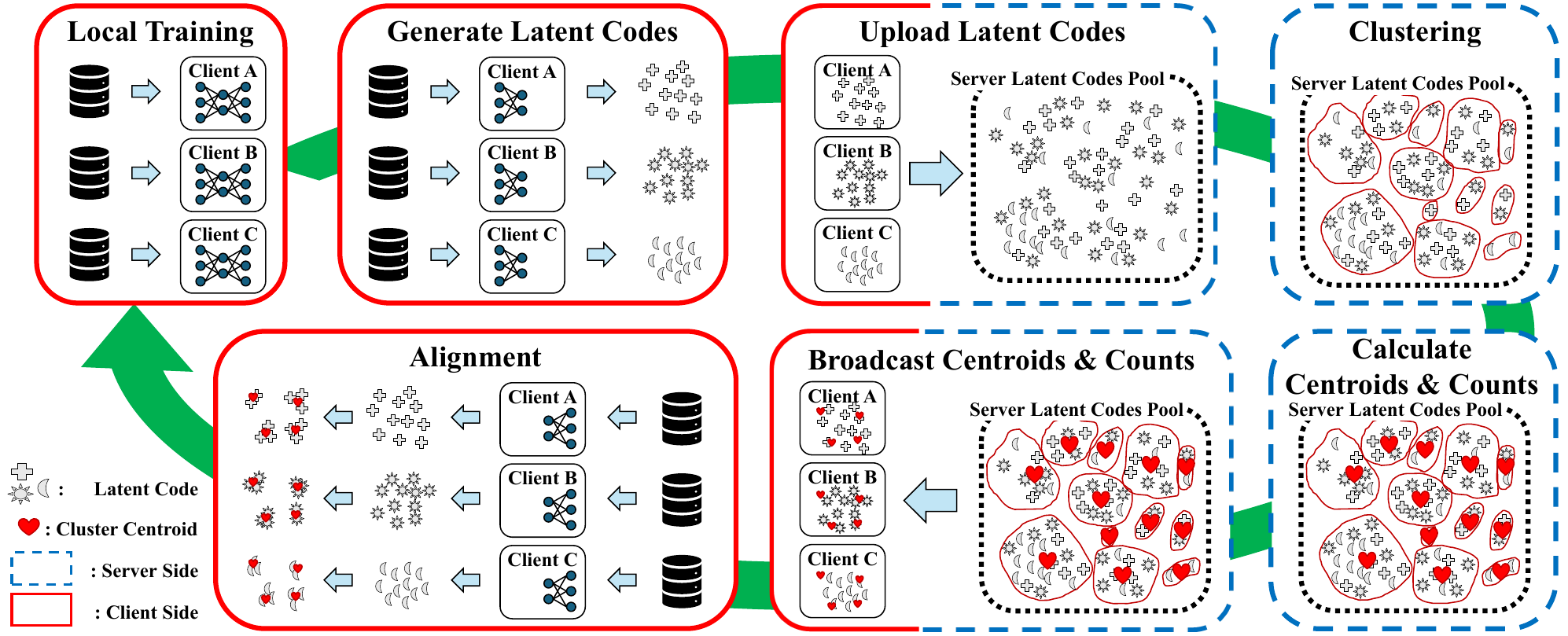}
    \vspace{-0.2 cm}
    \caption{\textbf{GCA pipeline.} (1) \emph{Local training:} each client trains an autoencoder on local normal data. (2) \emph{Generate/Upload Latent upload:} clients generate and upload a subset of latent codes. (3) \emph{Server aggregation:} the server pools latents, performs clustering, and computes global latent centroids and support counts. (4) \emph{Broadcast:} the server sends centroids and support counts to participating clients. (5) \emph{Alignment:} clients align their encoders to the global latent centroids using the broadcast statistics, improving cross-client consistency while keeping data and model updates private.}
    \label{fig:gpa_overview}
\end{figure*}
In this section, we propose a new federated mechanism, \textbf{Global Centroid Alignment (GCA)}, that (i) injects \emph{cross-client} information into each encoder \emph{without sharing raw data, model parameters, or gradients}, (ii) respects privacy requirements and the lack of anomalous training data, and (iii) remains lightweight and stable. Figure~\ref{fig:gpa_overview} summarizes the workflow, and Appendix~\ref{app:implementation-details} (Algorithm~\ref{alg:gpa}) formalizes a complete round.

\paragraph{\textbf{Setup.}}
We consider $C$ clients (sites with local data), indexed by $i\in\{1,\dots,C\}$, each holding \emph{only local normal} data
$\mathcal{D}_i=\{\x_n^{(i)}\in\mathbb{R}^{d_x}\}_{n=1}^{N_i}$. 
At round $r\in\{1,\dots,R\}$, the server samples a subset of participating clients $\mathcal{S}_r\subseteq\{1,\dots,C\}$ with $|\mathcal{S}_r|=m_r$.
Only clients in $\mathcal{S}_r$ perform local updates and communicate with the server in round $r$ (\emph{line 3} in Algorithm~\ref{alg:gpa}).  We denote the participation rate by $\phi:=m_r/C$ (constant for simplicity, but can vary by round).
Training runs for $R$ global rounds; in each round, client $i$ performs $E$ local epochs on minibatches $B_i=\{\x_b\}_{b=1}^{B}$.
For a sample $\x_b$, we define client $i$'s encoder and decoder 
\begin{equation}
\z_b \triangleq E_i(\x_b)\in\mathbb{R}^{d_z}, \; \hat{x}_b \triangleq D_i(\z_b)\in\mathbb{R}^{d_x} \; (d_z\ll d_x).
\end{equation}
Each global round consists of three phases and repeats until convergence or a target metric is achieved. We allocate the total local epochs $T$ between the reconstruction and alignment phases, $T = T_{\text{recon}} + T_{\text{align}}$.
In practice, one can emphasize local stabilization early and cross-client alignment later (e.g., start with larger $T_{\text{recon}}$ and gradually increase $T_{\text{align}}$), but any pair satisfying the sum constraint is admissible. In the below, all line numbers refer to Algorithm~\ref{alg:gpa}.

\subsection{Phase 1: Reconstruction \\ (Client-side, Encoder+Decoder)}\label{reconstruction}
For $T_{\text{recon}}$ epochs, client $i$ minimizes
\begin{equation}
\mathcal{L}^{(i)}_{\text{recon}}(\btheta_{E_i},\btheta_{D_i} ;B_i)=\frac{1}{B}\sum_{b=1}^{B}\big\|\x_b - D_i(E_i(\x_b))\big\|_2^2,
\label{eq:recon_loss}
\end{equation}
with joint updates
\begin{align*}
&\btheta_{E_i}\leftarrow\btheta_{E_i}-\eta_{\text{recon}}\nabla_{\btheta_{E_i}}\mathcal{L}^{(i)}_{\text{recon}}, 
\\
&\btheta_{D_i}\leftarrow\btheta_{D_i}-\eta_{\text{recon}}\nabla_{\btheta_{D_i}}\mathcal{L}^{(i)}_{\text{recon}} \qquad \text{\emph{(lines 6-13)}}.
\end{align*}
Starting with local reconstruction lets each AE \emph{stabilize} on its own manifold before any global pressure is applied.
After the local training, each participating client $i\in\mathcal{S}_r$ uploads only a \emph{subset} of latent codes
\begin{equation*}
\z_i^{\uparrow}\subset\{E_i(\x)\,:\,\x\in\mathcal{D}_i\},\quad |\z_i^{\uparrow}|=\rho_i N_i, \quad \rho_i\in(0,1],
\end{equation*}
to the server, \emph{not} raw data and \emph{not} model parameters or gradients \emph{(lines 15-19)}.

\subsection{Phase 2 --- Aggregation \\ (Server-side, on Pooled Latents)}\label{aggregation}
After Phase 1 in round $r$, the server pools the uploaded latents in round $r$ as
$Z^{(r)}=\bigcup_{i\in\mathcal{S}_r}\z_i^{\uparrow}=\{\z_n\}_{n=1}^{N_r}$,
where $N_r=\sum_{i\in\mathcal{S}_r}|\z_i^{\uparrow}|$,
and fits a clustering model to $Z$ (e.g., hard/soft $K$-means or a $K$-component Gaussian mixture model (GMM)) \emph{(lines 21-24)}.
\paragraph{\textbf{Example: GMM Component Means as Global Centroids.}}
For a GMM with mixture weights $\{\pi_k\}_{k=1}^{K}$, means $\{\mathbf{\mu}_k\}_{k=1}^{K}$, and covariances $\{\Sigma_k\}_{k=1}^{K}$,
the expectation--maximization (EM) responsibilities, effective counts, and the component mean are
$
\gamma_{nk}=\frac{\pi_k\,\mathcal{N}(\z_n\!\mid\!\mathbf{\mu}_k,\Sigma_k)}{\sum_{j=1}^{K}\pi_j\,\mathcal{N}(\z_n\!\mid\!\mathbf{\mu}_j,\Sigma_j)}$, 
$
N_k=\sum_{n=1}^{N_r}\gamma_{nk}$, and 
$ 
\mathbf{\mu}_k=\frac{1}{N_k}\sum_{n=1}^{N_r}\gamma_{nk}\,\z_n$ respectively.
In this GMM example, $N_k$ is a soft count because it sums responsibilities rather than hard assignments.
Other clustering methods similarly yield latent centroids $\{\mathbf{\mu}_k\}$ and associated counts.
The server then broadcasts the \emph{global latent centroid set} $\mathcal{M}=\{\mathbf{\mu}_k\}_{k=1}^{K}$ and \emph{counts} $\mathbf{N}_{\mathcal{M}}=\{N_k\}_{k=1}^{K}$ to each participating client $i\in\mathcal{S}_r$.

Working with pooled \emph{latent codes} avoids parameter and gradient exchange and reduces uplink cost, while clustering distills cross-client structure into a compact set of global latent centroids that clients can use without accessing other clients' data or models.
The implementation broadcasts all $K$ returned centroids and their support counts to the clients.

\subsection{Phase 3 --- Alignment (Client-side, Encoder-only)}\label{alignment}
For $T_{\text{align}}$ epochs, each client $i$ nudges its encoder toward the broadcast latent centroids while \emph{emphasizing rare global patterns}.
For each $\x_b$,
\begin{equation}
    a_b\in\argmin_{k\in\{1,\dots,K\}}\ \|E_i(\x_b)-\mathbf{\mu}_k\|_2,
\label{eq:nearest_latent}
\end{equation}
and we form an \emph{inverse-count} weight with approximately \emph{mean one} across centroids to keep the loss scale stable \emph{(lines 27-28)}. Let $\widehat N_k\triangleq\max(N_k,\epsilon)$, $\bar N\triangleq\frac{1}{K}\sum_{k=1}^{K}\widehat N_k$, and define
\begin{equation}
\begin{aligned}
\tilde w_k
&=\min\!\left\{w_{\max},
\left(\frac{\bar N}{\widehat N_k}\right)^\gamma\right\},\\
w_k
&=\frac{\tilde w_k}
{\frac{1}{K}\sum_{j=1}^{K}\tilde w_j+\epsilon}.
\end{aligned}
\label{eq:weights}
\end{equation}
We use hyperparameter values $\gamma=1.0$, $w_{\max}=6.0$, and $\epsilon=10^{-6}$.
These form a centroid-weight vector 
$
\mathbf{w}\ \triangleq\ (w_1,\dots,w_K)^\top \in \mathbb{R}_{>0}^{K}. 
$
We use $w_{a_b}$ to index the weight of the nearest centroid for $\x_b$. The alignment loss is
\begin{equation}
\mathcal{L}^{(i)}_{\text{align}}(\btheta_{E_i};\mathcal{M},\mathbf{w},B_i)=\frac{1}{B}\sum_{b=1}^{B} w_{a_b}\,\big\|E_i(\x_b)-\mathbf{\mu}_{a_b}\big\|_2^2,
\label{eq:align_loss}
\end{equation}
and we update \emph{only the encoder}:
\begin{equation*}
\btheta_{E_i}\leftarrow \btheta_{E_i}-\eta_{\text{align}}\nabla_{\btheta_{E_i}}\mathcal{L}^{(i)}_{\text{align}},
\end{equation*}
treating $(\mathbf{\mu}_k,N_k)$ as constants \emph{(lines 29-36)}.% (no gradient through server statistics).

\subsection{Inference and Thresholding}
For client $i$, the anomaly score is the reconstruction error
\begin{equation}
s(\x)\triangleq\big\|\x-D_i(E_i(\x))\big\|_2^2.
\label{eq:error_score}
\end{equation}
We classify an input as normal if $s(\x)\le\tau_i$ and anomalous otherwise. Since anomaly labels are unavailable during training, we set $\tau_i$ using a \emph{percentile} threshold computed from client $i$'s training errors:
\begin{equation}
\tau_i=\operatorname{Quantile}_{p_i}\!\left(\{s(\x):\x\in\mathcal{D}_i^{\text{train}}\}\right),
\quad
p_i\in[0,1].
\end{equation}
%which controls false positives without labels, is simple and computationally light, and remains stable under gradual encoder shifts induced by periodic alignment.

\subsubsection{Remark.}
The reconstruction, aggregation, and alignment schedule first allows each client to learn a stable local manifold, then injects cross-client structure through global latent centroids, reducing overfitting to site-specific idiosyncrasies. 
Inverse-count weighting in the alignment step further improves coverage of rare or under-represented modes by upweighting centroids with low global support~\cite{anand2010approach, chang2013oversampling, li2020analyzing}.
Because clients upload only a fraction of their latent codes and the server returns a compact set of centroids and counts, per-round communication depends on the uploaded-code count, latent dimension, and number of centroids rather than model size; no parameters or gradients are exchanged~\cite{yamazaki2022deep, jia2024generative}.

We also provide theoretical support for our design in Appendix~\ref{sec: analysis}, establishing (1) stable convergence to a minimizer in the convex case and (2) finite-time convergence to a critical point in the non-convex case.
% Aligning \emph{after} local reconstruction injects global structure without fighting an unstable encoder; nearest-prototype attraction shares information across sites, while inverse-count weights preferentially pull toward underrepresented modes, improving coverage of rare/previously unseen patterns. 
%Mean-one normalization keeps the alignment term comparably scaled as $K$ varies, simplifying learning-rate tuning.

\section{Adversarial Attack Scenarios}
\label{sec:attacks}
\begin{figure*}[t]
    \centering
    \includegraphics[width=0.8\linewidth]{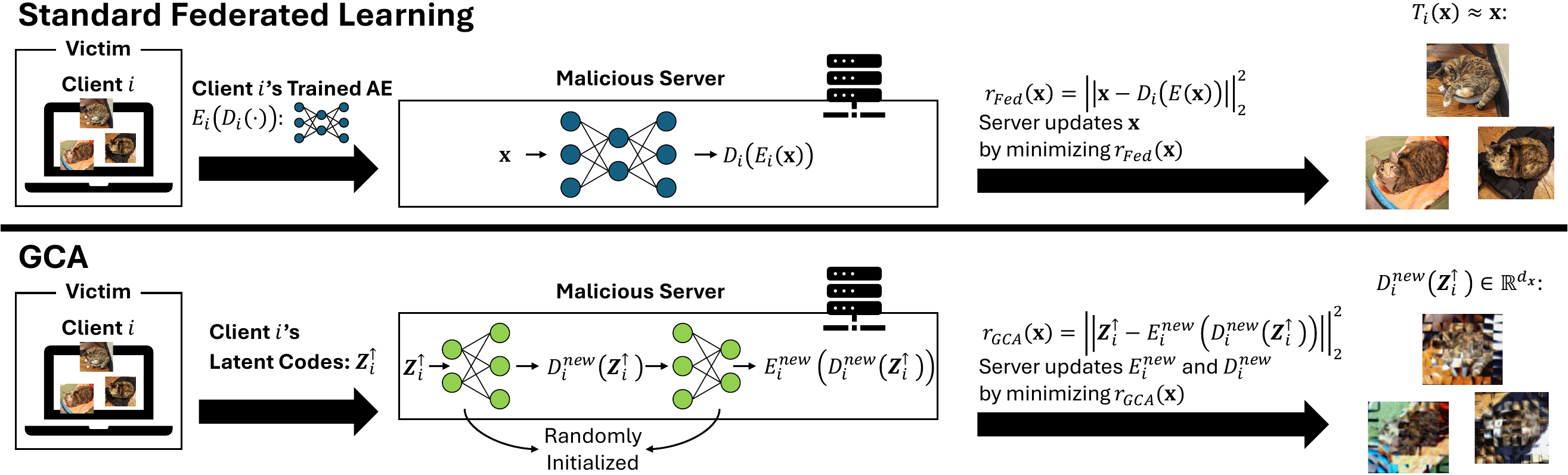}
    \caption{\textbf{Server-side training-data extraction threat models.} \textbf{Top: Standard FL.} The server obtains the victim reconstruction map and performs DeepDream-style input optimization. \textbf{Bottom: GCA.} The server receives uploaded latent codes, trains a fresh surrogate reconstruction map, and applies the same input optimization. Both attacks output the model reconstruction of the optimized input.}
    \label{fig:attack_scens}
\end{figure*}

This section formalizes the server-side threat models considered for \emph{autoencoder-based} federated anomaly detection and our GCA protocol.
We focus on \emph{training data extraction} attacks, since autoencoders are trained to reproduce normal inputs and may therefore leak input training data information through their reconstructions.
We consider a malicious (or honest-but-curious) server under two settings:
(i) \textbf{standard FL}, where the server receives locally trained AE models (parameters or gradients), and
(ii) \textbf{GCA}, where the server receives only uploaded latent codes, and never receives parameters or gradients.
Figure~\ref{fig:attack_scens} depicts the standard-FL inversion and GCA latent-only workflows.
In both cases, the adversary's goal is to reconstruct or closely approximate the private training records of a victim client.

\subsection{Standard FL: DeepDream-style Model Inversion}
\label{subsec:attack_stdfl_deepdream}
DeepDream-style attacks synthesize an input by \emph{directly optimizing the input} against a trained network, following~\cite{mordvintsev2015inceptionism, ghoshal2025reverse}. First, we consider a generic classifier with logits $s(\x)\in\mathbb{R}^{K}$ and a target class $y$.
A canonical DeepDream/feature-visualization objective performs gradient ascent on the class score:
\begin{equation}
\max_{\x\in\mathcal{X}} \; s_y(\x)
\quad\Longrightarrow\quad
\x^{(t+1)}=\Pi_{\mathcal{X}}\!\Big(\x^{(t)}+\eta\,\nabla_{\mathbf{x}} s_y(\x^{(t)})\Big),
\label{eq:deepdream_cls}
\end{equation}
where $\eta>0$ is a step size and $\Pi_{\mathcal{X}}$ projects onto the valid input domain $\mathcal{X}$ (e.g., box constraints such as pixel bounds).

The same input-optimization template applies to autoencoders once we replace the classification score with a reconstruction-based objective.
For a victim client $i$, let the autoencoder define an encoder $E_i:\mathbb{R}^{d_x}\!\to\!\mathbb{R}^{d_z}$ and decoder $D_i:\mathbb{R}^{d_z}\!\to\!\mathbb{R}^{d_x}$, with reconstruction map
\begin{equation}
T_i(\x)\ \triangleq\ D_i(E_i(\x)).
\end{equation}
A natural ``DeepDream-style'' objective for an AE is to find inputs that the AE reconstructs \emph{especially well}, i.e., inputs with small reconstruction error $s_i(\x)$ (the same anomaly score used by client $i$ in GCA's inference stage):
\begin{equation}
\min_{\x\in\mathcal{X}} \; s_i(\x),
\qquad
s_i(\x)\ \triangleq\ \|\x - T_i(\x)\|_2^2.
\label{eq:attack_recon_energy}
\end{equation}
This mirrors \eqref{eq:deepdream_cls} but uses the reconstruction energy instead of a class logit.

In standard FL, the server receives client-side AE models (parameters or gradients). Thus, for a targeted victim client $i$, the server can obtain a \emph{white-box} copy of $(E_i,D_i)$ and compute $\nabla_\mathbf{x} s_i(\x)$ by backpropagation.
Starting from an initialization $\x^{(0)}$ (e.g., random noise or a public sample), the malicious server performs gradient descent on \eqref{eq:attack_recon_energy}:
\begin{equation}
\x^{(t+1)} \;=\; \Pi_{\mathcal{X}}\!\Big(\x^{(t)} - \eta \nabla_{\mathbf{x}} s_i\big(\x^{(t)}\big)\Big),
\quad t=0,1,\dots,T-1,
\label{eq:attack_deepdream_update}
\end{equation}
and outputs $\tilde{\mathbf{x}}=T_i(\x^{(T)})$ as an extracted training-like record.

\paragraph{\textbf{Remark.}}
Autoencoders can be particularly vulnerable to this style of inversion because their training objective explicitly encourages \emph{input reproduction} rather than correct classification.
We provide a theoretical discussion in Appendix~\ref{app:ae_vulnerability}. % (update label)

\subsection{GCA: Latent Code Leakage via Decoder--Encoder ``Flipping''}
\label{subsec:attack_gpa_flip}
In GCA, the server never observes the client's AE parameters or gradients; instead, the server receives only the latent codes.
For a victim client $i$, let
\begin{equation}
Z_i^{\uparrow}=\{\z_n^{(i)}\}_{n=1}^{M_i}\subset\mathbb{R}^{d_z},
\qquad \z_n^{(i)} = E_i(\x_n^{(i)}),
\end{equation}
denote the set of uploaded latents (with $M_i=\rho_i N_i$).
Although the server does not have access to the victim decoder $D_i$, it may still attempt to recover training-like inputs by \emph{learning a surrogate inverse map} from latent space back to input space using only the received latent samples.

\paragraph{\textbf{Decoder--encoder flipping attack.}}
The server constructs a \emph{surrogate} autoencoder whose \emph{input} is a latent code $\z\in\mathbb{R}^{d_z}$ and whose \emph{output} lies in the original input space $\mathbb{R}^{d_x}$.
Concretely, the server defines ``flipped'' encoder and decoder
$
\tilde{D}_i:\mathbb{R}^{d_z}\to\mathbb{R}^{d_x},
$ and $
\tilde{E}_i:\mathbb{R}^{d_x}\to\mathbb{R}^{d_z},
$
and trains the pair $(\tilde{D}_i,\tilde{E}_i)$ \emph{only on client $i$'s uploaded codes} (i.e., without mixing latents across clients) by minimizing a latent-space reconstruction objective
\begin{equation}
\begin{aligned}
\min_{\theta_{\tilde{D}_i},\theta_{\tilde{E}_i}}
\frac{1}{M_i}\sum_{n=1}^{M_i}
\|\z_n^{(i)}- \hat{\mathbf{z}}_n^{(i)}\|_2^2,
\;\;
\hat{\mathbf{z}}_n^{(i)} \triangleq \tilde{E}_i\!\big(\tilde{D}_i(\z_n^{(i)})\big).
\label{eq:flip_obj}
\end{aligned}
\end{equation}
This procedure is supervised in the sense that each latent $\z_n^{(i)}$ serves as its own training target, even though no raw inputs $\x_n^{(i)}$ are available.

\paragraph{\textbf{Surrogate input optimization}.}
After latent-only training, the server defines the surrogate reconstruction map
\begin{equation}
\tilde T_i(\x)\ \triangleq\ \tilde D_i(\tilde E_i(\x))
\label{eq:flip_recover_x}
\end{equation}
and applies the same update in \eqref{eq:attack_deepdream_update}, replacing $T_i$ with $\tilde T_i$. The GCA attack outputs $\tilde T_i(\x^{(T)})$; direct decodes $\tilde D_i(\z)$ are retained only as a GCA-specific source diagnostic.

\paragraph{\textbf{Remark.}}
This attack leverages the fact that GCA reveals a set of latent codes $Z_i^\uparrow$ for each client.
Unlike standard-FL inversion, it does not require backpropagation through the \emph{victim} autoencoder. Because the server trains $(\tilde{D}_i,\tilde{E}_i)$ from scratch using only latent samples, neither $\tilde D_i(\z)$ nor $\tilde T_i(\x)$ is guaranteed to be semantically faithful in input space. Appendix~\ref{app:flip_can_work_but_harder} discusses this underdetermination.

\section{Experimental Evaluation}
\label{sec:experiments}
% \subsection{Main Findings}
% $\bullet$
Through our experimental validation, we find that GCA consistently outperforms isolated Single-Client training and achieves test accuracy comparable to parameter-sharing FL methods. % \\
%$\bullet$
For privacy, GCA provides stronger client-data extraction defense than FedAvg, FedProx, and FedNova, and more consistent defense than DP-FedAvg under the attacks discussed in \textit{Adversarial Attack Scenarios}.% \\
% $\bullet$
We also show that GCA substantially reduces communication costs.

\subsection{Anomaly Detection Evaluation}
\label{subsec:perf}

\paragraph{\textbf{Setup.}}
We evaluate GCA against \emph{Single-Client} (no collaboration), \emph{Centralized} (pooled data), \emph{FedAvg} (sample-weighted model averaging), \emph{FedProx} (FedAvg with proximal regularization), \emph{FedNova} (normalized updates), and \emph{DP-FedAvg} (differential-privacy-style clipped/noisy FedAvg) on five tabular and two vision datasets using three seeds. Each independent-and-identically-distributed (IID) case partitions training data into $C$ disjoint uniform shards. Tabular models use a feed-forward AE (encoder $d_x\to256\to64\to16$, symmetric decoder; ReLU; linear head). Vision models use a convolutional AE with flattened latent dimensions 1,024 (CIFAR-10) and 784 (Fashion-MNIST).

All methods are optimized with Adam (learning rate $10^{-3}$) and trained for $R=100$ global rounds with a mini-batch size of 50.
In GCA, each round consists of $T_{\text{recon}}=5$ local epochs of reconstruction followed by $T_{\text{align}}=5$ local epochs of alignment (Algorithm~\ref{alg:gpa}).
To match total local computation per round, \textbf{FedAvg}, \textbf{FedProx}, \textbf{FedNova}, \textbf{DP-FedAvg}, \textbf{Single-Client}, and \textbf{Centralized} use 10 local epochs per round/epoch-equivalent.
FedAvg and FedProx aggregate full model parameters; FedProx uses a configured proximal coefficient $\mu=10^{-2}$ and an effective implemented proximal-loss multiplier of $10^{-4}$.

In our experiments, all FL variants and GCA use \emph{full client participation} in every round, i.e., $\mathcal{S}_r=\{1,\dots,C\}$ and $\phi=1$.
In GCA, after the reconstruction phase, each client uploads a subset of latent codes with size $|Z_i^\uparrow|=\rho_i N_i$. The server pools the uploaded latent codes as $Z^{(r)}=\bigcup_{i=1}^{C} Z_i^\uparrow$, fits a $K$-component GMM using a modality-specific covariance type, and broadcasts the non-empty component means and effective support counts $(\mu_k,N_k)$ to all clients.
Each client then performs encoder-only alignment using inverse-count weights normalized to have approximately mean one, as described in the \textit{Methodology} section.

Dataset-specific (client count $C$, selected GMM component count $K^\star$, latent upload fraction $\rho$) values are:
\emph{Predict Students' Dropout and Academic Success} $(10,10,0.5)$;
\emph{Adult Income} $(20,10,0.1)$;
\emph{MAGIC} $(10,80,0.25)$;
\emph{Credit Card Fraud Detection} $(50,20,0.1)$;
\emph{Bank Marketing (Full)} $(20,10,0.1)$; 
\emph{CIFAR-10} $(20,20,0.1)$;
\emph{Fashion-MNIST} $(20,10,0.1)$.

% \emph{We use the same number of clients $C$ and the same client-participation schedule for all decentralized baselines.}

To fit the GMM model to the pooled latent codes, we initialize the component means using Torch K-means++ clustering. For tabular datasets, we use a Torch full-covariance GMM with covariance regularization $10^{-6}$ and one initialization. For vision datasets, we use a Torch diagonal-covariance GMM with covariance regularization $0.1$ and 100 initializations. In all cases, we run EM for at most 200 iterations with tolerance $10^{-3}$.

Training uses only ``normal'' data on each client (one-class setting).
For \emph{Adult}, \emph{CIFAR-10}, and \emph{Fashion-MNIST}, we evaluate on the provided test splits.
For the remaining tabular datasets, we form a balanced test set from the anomalous rows and an equal-sized held-out tail of the normal rows.
For tabular preprocessing, we remove highly correlated features (Pearson correlation $>0.6$) and apply standard normalization (fit on training normals and applied to test).
Appendix~\ref{app:implementation-details} documents common settings, preprocessing and partitions, GCA and every baseline, attack implementations, and reproducibility controls.

\paragraph{\textbf{Evaluation Protocol.}}
The baselines are evaluated once per round using their aggregated global models. Since GCA does not produce a global model, it is evaluated by averaging the local models' test accuracies during the reconstruction phase for a certain epoch.
For each seed, we report the highest test accuracy across rounds. We select $K^\star$ as the number of clusters that yields the highest three-seed mean accuracy among all tested $K$ values.

\paragraph{\textbf{Results.}}
\begin{figure}[t]
    \centering
    \iidresultsfigure
    \caption{\textbf{IID anomaly-detection test accuracy; higher is better.}
    GCA outperforms Single-Client training on all seven datasets and achieves accuracy comparable to the parameter-sharing FL baselines. Bars show three-seed means and standard deviations. Full results are in Appendix~\ref{app:iid-baseline-results} (Table~\ref{tab:iid-expanded-baselines}).}
    \label{fig:test-acc}
\end{figure}

Figure~\ref{fig:test-acc} reports the primary IID comparison. GCA outperforms Single-Client on all seven datasets and FedAvg on five of seven, with relative changes against FedAvg ranging from $-0.92\%$ to $+5.76\%$. It remains competitive with FedProx, FedNova, and DP-FedAvg, with win/loss counts of 5/2, 5/2, and 4/3, respectively. Exact per-dataset results are provided in Appendix~\ref{app:iid-baseline-results} (Table~\ref{tab:iid-expanded-baselines}).

\paragraph{\textbf{Ablations.}}
Appendix~\ref{app:additional-experiments} supports $K^\star$ with the GMM sweep (Tables~\ref{tab:selected-gca-k}--\ref{tab:iid-gmm-k-sweep}), compares inverse and uniform alignment weights (Table~\ref{tab:iid-gmm-alignment-weighting}), tests three clustering backends (Tables~\ref{tab:iid-kmeans-k-sweep}--\ref{tab:iid-minibatch-kmeans-k-sweep}), and reports the non-IID $K$ sweep and baseline comparison (Tables~\ref{tab:noniid-gmm-k-sweep} and~\ref{tab:noniid-results}).

\begin{figure}[t]
    \centering
    \leakageNTMSEFigure
    \caption{\textbf{Normalized target MSE (NTMSE) under the server-side extraction attack; higher is better for extraction defense.}
    Higher NTMSE means that the attack outputs have larger reconstruction error relative to held-out normal records and are therefore less similar to the client's training data. Bars show three-seed means and standard deviations. Full results are in Appendix~\ref{app:attack-results} (Table~\ref{tab:expanded-attack-primary}).}
    \label{fig:test-attack-ntmse}
\end{figure}
\begin{figure}[t]
    \centering
    \leakageCosineFigure
    \caption{\textbf{Excess cosine similarity under the server-side extraction attack; lower is better for extraction defense.}
    Lower excess cosine similarity means that the attack outputs are less similar to the client's training data relative to held-out normal records. Bars show three-seed means and standard deviations. Full results are in Appendix~\ref{app:attack-results} (Table~\ref{tab:expanded-attack-primary}).}
    \label{fig:test-attack-cosine}
\end{figure}

\subsection{Adversarial Evaluation}
\label{subsec:adv_eval}
\paragraph{\textbf{Setup}.}
% We evaluate one client per dataset and seed. FedAvg, FedProx, FedNova, and DP-FedAvg use the client's final-round autoencoder.
% For GCA, we also use the client's final-round autoencoder but perform five additional local reconstruction epochs before uploading the configured latent sample, following GCA's standard reconstruction-and-upload procedure. 
% An isolated process then trains a fresh surrogate reconstruction map using only the latent-cycle loss in~\eqref{eq:flip_obj}, without access to client parameters or raw records (Appendix~\ref{app:attack-implementations}).
We implement the two server-side extraction attacks defined in \textit{Adversarial Attack Scenarios}. For FedAvg, FedProx, FedNova, and DP-FedAvg, the server applies the white-box model-inversion attack to the client's final-round autoencoder.
For GCA, the client starts from its final-round checkpoint and performs five additional local reconstruction epochs before uploading the configured latent sample, following GCA's standard reconstruction-and-upload procedure.
An isolated server-side process then trains a fresh decoder--encoder flipping surrogate using only the latent-cycle loss in~\eqref{eq:flip_obj} and applies the same input-optimization update in~\eqref{eq:attack_deepdream_update}, without access to client parameters or raw records (Appendix~\ref{app:attack-implementations}).
% This construction gives every method the strongest model artifact available to the server under its stated communication interface while keeping the client's complete training shard available only to the evaluator.

\paragraph{\textbf{Symmetric attack protocol.}}
Both attacks use the same 100 starts and DeepDream optimizer: independent Adam ($10^{-2}$) runs of at most 1000 steps, with per-start best-state tracking and 10-step relative-improvement stopping at $10^{-3}$. Vision is projected to $[-1,1]$ and standardized tabular inputs are unbounded. The output is the reconstruction $T(\x^\star)$ rather than the optimized input itself, using the client's reconstruction map for the parameter-sharing methods and the latent-only surrogate for GCA. Matching starts and optimization settings isolate the effect of the server-visible reconstruction map rather than giving either communication protocol a different search budget.

\paragraph{\textbf{Metrics.}}
For every training target $\x$, let $r^\star(\x)$ be its MSE (mean-squared error)-nearest output among the 100 reconstructions. Cosine similarity is evaluated against this same $r^\star(\x)$, so the attack cannot use one output for the distance result and another for the similarity result.
With a reference bank of 100 normal records excluded from training, NTMSE is the normalized target MSE, defined as the mean target-to-output MSE divided by the mean target-to-held-out-record MSE. Excess cosine similarity, $\Delta\cos$, is the mean target-to-output cosine similarity minus the mean target-to-held-out-record cosine similarity.
\emph{Higher NTMSE and lower $\Delta\cos$ therefore mean that the attack outputs are less similar to the training shard relative to ordinary held-out records.} The normalization is performed within each dataset, client, and seed; we do not average raw distances across datasets with different feature scales. Complete definitions and all per-dataset values are reported in Appendix~\ref{app:attack-results} and Table~\ref{tab:expanded-attack-primary}.

\paragraph{\textbf{Results.}}
Figures~\ref{fig:test-attack-ntmse} and~\ref{fig:test-attack-cosine} summarize the two leakage measures. We first compare GCA with DP-FedAvg. GCA has higher NTMSE on four of seven datasets and lower excess cosine similarity on five of seven, favoring GCA in 9 of the 14 dataset--metric comparisons. GCA is favored by both metrics on Adult, Bank, Credit, and Fashion-MNIST, whereas DP-FedAvg has higher NTMSE on Academic, MAGIC, and CIFAR-10. More importantly, \textit{DP-FedAvg does not consistently improve over the standard FL variants}: in 4 of the 14 comparisons---both metrics on Credit and excess cosine similarity on CIFAR-10 and Fashion-MNIST---it performs worse than FedAvg, FedProx, and FedNova. In contrast, GCA consistently maintains low target resemblance across these settings. These results indicate that GCA provides more consistent empirical extraction defense than the tested DP-FedAvg configuration.

Compared with the standard parameter-sharing FL methods, GCA has higher NTMSE than FedAvg, FedProx, and FedNova on every dataset, giving 21 of 21 favorable comparisons. It also has lower excess cosine similarity in 20 of 21 comparisons: seven of seven against FedAvg, seven of seven against FedNova, and six of seven against FedProx. The only exception is Credit against FedProx, where GCA has slightly higher NTMSE ($1.352$ versus $1.346$) but slightly higher excess cosine similarity ($-0.559$ versus $-0.584$).

The size of the separation varies by dataset. On Academic, GCA reaches an NTMSE of $1.838$ and excess cosine similarity of $-0.563$, compared with $0.921$ and $-0.007$ for FedAvg. On Adult, the corresponding values are $2.216$ and $-0.702$ for GCA versus $1.298$ and $-0.098$ for FedAvg. GCA achieves its highest NTMSE on Fashion-MNIST ($13.940$), where it also obtains its lowest excess cosine similarity ($-1.003$). These results show that the GCA surrogate outputs remain farther from and less directionally similar to the clients' training records than the outputs obtained from the parameter-sharing FL methods.

Overall, GCA provides stronger and more consistent empirical extraction defense than FedAvg, FedProx, FedNova, and the tested DP-FedAvg configuration. Moreover, GCA reduces per-round communication by $84.14\%$--$99.15\%$, providing a more favorable empirical accuracy--extraction-defense--communication trade-off in our evaluated settings (see the discussion below and Appendix~\ref{app:efficiency-results}). Full results are provided in Appendix~\ref{app:attack-results} (Table~\ref{tab:expanded-attack-primary}).

\subsection{Communication Efficiency}
%In this section, we discuss the communication efficiency of GRA compared to FedAvg-style AE with GRA.
%, accounting for local training, aggregation, and redistribution.
\paragraph{\textbf{Metrics.}}
Let $C$ be the number of clients, $P$ the AE parameter count, $D_z$ the flattened latent dimension, and $U=\sum_{i=1}^{C}L_i$ the total number of uploaded latent codes in one round. All clients participate; every scalar uses 32-bit floating point (FP32), so $\beta=4$ bytes. GCA uploads latent codes and broadcasts $K$ centroids and their $K$ support counts to every client. The round-level payloads are
\begin{equation*}
\begin{aligned}
B_{\mathrm{GCA}}^\uparrow &= \beta U D_z,&
B_{\mathrm{GCA}}^\downarrow &= \beta C K(D_z+1),\\
B_{\mathrm{FL}}^\uparrow &= \beta C P,&
B_{\mathrm{FL}}^\downarrow &= \beta C P.
\end{aligned}
\end{equation*}
Thus $B_{\mathrm{GCA}}=\beta[UD_z+CK(D_z+1)]$ and $B_{\mathrm{FL}}=2\beta CP$ per round. In terms of scaling, FL is $\mathcal{O}(CP)$, whereas GCA is $\mathcal{O}(UD_z+CKD_z)$ and does not depend on model size. Under this accounting, GCA communicates less whenever
$
U < \frac{C\left[2P-K(D_z+1)\right]}{D_z}.
$
The upload fraction controls $U$, while the bottleneck architecture controls $D_z$.

\paragraph{\textbf{Results.}}
Across these evaluated settings, GCA reduces per-round communication by $84.14\%$--$99.15\%$ relative to full-model FL. Appendix~\ref{app:efficiency-protocol} defines the full scope of these settings; Appendix~\ref{app:efficiency-results} lists $C,U,D_z,P$ in Table~\ref{tab:communication-inputs} and reports every dataset and $K\in\{10,20,40,80\}$ in Table~\ref{tab:communication-per-round}.

\section{Conclusion}\label{sec:conclusion}
GCA replaces AE parameter and gradient exchange with sampled latent codes, global centroids, and support counts. Across seven datasets, GCA outperforms FedAvg on five datasets by up to $5.76\%$ while reducing per-round communication by $84.14\%$--$99.15\%$.

Under the matched extraction attack, GCA has higher normalized target MSE in all 21 comparisons and lower excess cosine similarity in 20 of 21 comparisons with FedAvg, FedProx, and FedNova. GCA also provides more consistent extraction defense than the tested DP-FedAvg configuration, outperforming it in 9 of 14 dataset--metric comparisons.

Future work will consider stronger adversaries, dynamic centroids, uncertainty-aware responsibilities, and robust or privacy-preserving clustering mechanisms with formal privacy guarantees.

\clearpage
\newpage
\bibliography{aaai2027}

@inproceedings{mcmahan2017communication,
  title={Communication-efficient learning of deep networks from decentralized data},
  author={McMahan, Brendan and Moore, Eider and Ramage, Daniel and Hampson, Seth and y Arcas, Blaise Aguera},
  booktitle={International Conference on Artificial Intelligence and Statistics},
  pages={1273--1282},
  year={2017},
  organization={PMLR}
}

@article{li2020federated,
  title={Federated optimization in heterogeneous networks},
  author={Li, Tian and Sahu, Anit Kumar and Zaheer, Manzil and Sanjabi, Maziar and Talwalkar, Ameet and Smith, Virginia},
  journal={Proceedings of Machine Learning and Systems},
  volume={2},
  pages={429--450},
  year={2020}
}

@inproceedings{park2024federated,
  title={Federated learning with flexible architectures},
  author={Park, Jong-Ik and Joe-Wong, Carlee},
  booktitle={Joint European Conference on Machine Learning and Knowledge Discovery in Databases},
  pages={143--161},
  year={2024},
  organization={Springer}
}

@article{li2021survey,
  title={A survey on federated learning systems: Vision, hype and reality for data privacy and protection},
  author={Li, Qinbin and Wen, Zeyi and Wu, Zhaomin and Hu, Sixu and Wang, Naibo and Li, Yuan and Liu, Xu and He, Bingsheng},
  journal={IEEE Transactions on Knowledge and Data Engineering},
  volume={35},
  number={4},
  pages={3347--3366},
  year={2021},
  publisher={IEEE}
}

@incollection{thapa2021advancements,
  title={Advancements of federated learning towards privacy preservation: from federated learning to split learning},
  author={Thapa, Chandra and Chamikara, Mahawaga Arachchige Pathum and Camtepe, Seyit A},
  booktitle={Federated Learning Systems: Towards Next-Generation AI},
  pages={79--109},
  year={2021},
  publisher={Springer}
}

@article{zhang2021survey,
  title={A survey on federated learning},
  author={Zhang, Chen and Xie, Yu and Bai, Hang and Yu, Bin and Li, Weihong and Gao, Yuan},
  journal={Knowledge-Based Systems},
  volume={216},
  pages={106775},
  year={2021},
  publisher={Elsevier}
}

@article{li2020review,
  title={A review of applications in federated learning},
  author={Li, Li and Fan, Yuxi and Tse, Mike and Lin, Kuo-Yi},
  journal={Computers \& Industrial Engineering},
  volume={149},
  pages={106854},
  year={2020},
  publisher={Elsevier}
}

@article{konecny2016federated,
  title={Federated learning: Strategies for improving communication efficiency},
  author={Kone{\v{c}}n{\`y}, Jakub and McMahan, H Brendan and Yu, Felix X and Richt{\'a}rik, Peter and Suresh, Ananda Theertha and Bacon, Dave},
  journal={arXiv preprint arXiv:1610.05492},
  year={2016}
}

@inproceedings{ma2025fedsum,
  title={FedSum: Data-Efficient Federated Learning Under Data Scarcity Scenario for Text Summarization},
  author={Ma, Zhiyong and Li, Zhengping and Shi, Yuanjie and Chen, Jian},
  booktitle={Proceedings of the AAAI Conference on Artificial Intelligence},
  volume={39},
  pages={19340--19348},
  year={2025}
}

@article{rieke2020future,
  title={The future of digital health with federated learning},
  author={Rieke, Nicola and Hancox, Jonny and Li, Wenqi and Milletari, Fausto and Roth, Holger R and Albarqouni, Shadi and Bakas, Spyridon and Galtier, Mathieu N and Landman, Bennett A and Maier-Hein, Klaus and others},
  journal={NPJ Digital Medicine},
  volume={3},
  number={1},
  year={2020},
  publisher={Nature Publishing Group UK London}
}

@incollection{long2020federated,
  title={Federated learning for open banking},
  author={Long, Guodong and Tan, Yue and Jiang, Jing and Zhang, Chengqi},
  booktitle={Federated Learning: Privacy and Incentive},
  pages={240--254},
  year={2020},
  publisher={Springer}
}

@article{khowaja2025selffed,
  title={SelfFed: Self-supervised federated learning for data heterogeneity and label scarcity in medical images},
  author={Khowaja, Sunder Ali and Dev, Kapal and Anwar, Syed Muhammad and Linguraru, Marius George},
  journal={Expert Systems with Applications},
  volume={261},
  pages={125493},
  year={2025},
  publisher={Elsevier}
}

@article{kea2023enhancing,
  title={Enhancing anomaly detection in distributed power systems using autoencoder-based federated learning},
  author={Kea, Kimleang and Han, Youngsun and Kim, Tae-Kyung},
  journal={PLOS One},
  volume={18},
  number={8},
  pages={e0290337},
  year={2023},
  publisher={Public Library of Science San Francisco, CA USA}
}

@article{shrestha2024anomaly,
  title={Anomaly detection based on lstm and autoencoders using federated learning in smart electric grid},
  author={Shrestha, Rakesh and Mohammadi, Mohammadreza and Sinaei, Sima and Salcines, Alberto and Pampliega, David and Clemente, Raul and Sanz, Ana Lourdes and Nowroozi, Ehsan and Lindgren, Anders},
  journal={Journal of Parallel and Distributed Computing},
  volume={193},
  pages={104951},
  year={2024},
  publisher={Elsevier}
}

@article{novoa2023fast,
  title={Fast deep autoencoder for federated learning},
  author={Novoa-Paradela, David and Fontenla-Romero, Oscar and Guijarro-Berdi{\~n}as, Bertha},
  journal={Pattern Recognition},
  volume={143},
  pages={109805},
  year={2023},
  publisher={Elsevier}
}

@inproceedings{vucovich2023anomaly,
  title={Anomaly detection via federated learning},
  author={Vucovich, Marc and Tarcar, Amogh and Rebelo, Penjo and Rahman, Abdul and Nandakumar, Dhruv and Redino, Christopher and Choi, Kevin and Schiller, Robert and Bhattacharya, Sanmitra and Veeramani, Balaji and others},
  booktitle={2023 33rd International Telecommunication Networks and Applications Conference},
  pages={259--266},
  year={2023},
  organization={IEEE}
}

@article{jiang2022privacy,
  title={Privacy-preserving high-dimensional data collection with federated generative autoencoder},
  author={Jiang, Xue and Zhou, Xuebing and Grossklags, Jens},
  journal={Proceedings on Privacy Enhancing Technologies},
  year={2022}
}

@article{chen2023auto,
  title={Auto-encoders in deep learning—a review with new perspectives},
  author={Chen, Shuangshuang and Guo, Wei},
  journal={Mathematics},
  volume={11},
  number={8},
  pages={1777},
  year={2023},
  publisher={MDPI}
}

@article{ghoshal2025reverse,
  title={Reverse-Engineering Autoencoders: Optimized Gradient-Inversion Attacks for Sensitive Data Extraction},
  author={Ghoshal, Arjun and Das, Rajdeep and Kundu, Arunava and De, Indrajit},
  journal={Authorea Preprints},
  year={2025},
  publisher={Authorea}
}

@inproceedings{suganuma2018exploiting,
  title={Exploiting the potential of standard convolutional autoencoders for image restoration by evolutionary search},
  author={Suganuma, Masanori and Ozay, Mete and Okatani, Takayuki},
  booktitle={International Conference on Machine Learning},
  year={2018},
  organization={PMLR}
}

@article{steck2020autoencoders,
  title={Autoencoders that don't overfit towards the identity},
  author={Steck, Harald},
  journal={Advances in Neural Information Processing Systems},
  volume={33},
  pages={19598--19608},
  year={2020}
}

@inproceedings{chen2017outlier,
  title={Outlier detection with autoencoder ensembles},
  author={Chen, Jinghui and Sathe, Saket and Aggarwal, Charu and Turaga, Deepak},
  booktitle={Proceedings of the 2017 SIAM International Conference on Data Mining},
  pages={90--98},
  year={2017},
  organization={SIAM}
}

@article{singh2019detailed,
  title={Detailed comparison of communication efficiency of split learning and federated learning},
  author={Singh, Abhishek and Vepakomma, Praneeth and Gupta, Otkrist and Raskar, Ramesh},
  journal={arXiv preprint arXiv:1909.09145},
  year={2019}
}

@inproceedings{gao2020end,
  title={End-to-End Evaluation of Federated Learning and Split Learning for Internet of Things},
  author={Gao, Yansong and Kim, Minki and Abuadbba, Sharif and Kim, Yeonjae and Thapa, Chandra and Kim, Kyuyeon and Camtep, Seyit A and Kim, Hyoungshick and Nepal, Surya},
  booktitle={2020 International Symposium on Reliable Distributed Systems (SRDS)},
  pages={91--100},
  year={2020},
  organization={IEEE}
}

@inproceedings{jia2024generative,
  title={Generative latent coding for ultra-low bitrate image compression},
  author={Jia, Zhaoyang and Li, Jiahao and Li, Bin and Li, Houqiang and Lu, Yan},
  booktitle={Proceedings of the IEEE/CVF Conference on Computer Vision and Pattern Recognition},
  pages={26088--26098},
  year={2024}
}

@inproceedings{yamazaki2022deep,
  title={Deep feature compression using rate-distortion optimization guided autoencoder},
  author={Yamazaki, Meguru and Kora, Yuichiro and Nakao, Takanori and Lei, Xuying and Yokoo, Kaoru},
  booktitle={IEEE International Conference on Image Processing},
  pages={1216--1220},
  year={2022},
  organization={IEEE}
}

@article{tschannen2018recent,
  title={Recent advances in autoencoder-based representation learning},
  author={Tschannen, Michael and Bachem, Olivier and Lucic, Mario},
  journal={arXiv preprint arXiv:1812.05069},
  year={2018}
}

@article{li2020analyzing,
  title={Analyzing overfitting under class imbalance in neural networks for image segmentation},
  author={Li, Zeju and Kamnitsas, Konstantinos and Glocker, Ben},
  journal={IEEE Transactions on Medical Imaging},
  volume={40},
  number={3},
  pages={1065--1077},
  year={2020},
  publisher={IEEE}
}

@article{chang2013oversampling,
  title={Oversampling to overcome overfitting: exploring the relationship between data set composition, molecular descriptors, and predictive modeling methods},
  author={Chang, Chia-Yun and Hsu, Ming-Tsung and Esposito, Emilio Xavier and Tseng, Yufeng J},
  journal={Journal of Chemical Information and Modeling},
  volume={53},
  number={4},
  pages={958--971},
  year={2013},
  publisher={ACS Publications}
}

@article{anand2010approach,
  title={An approach for classification of highly imbalanced data using weighting and undersampling},
  author={Anand, Ashish and Pugalenthi, Ganesan and Fogel, Gary B and Suganthan, PN},
  journal={Amino Acids},
  volume={39},
  number={5},
  pages={1385--1391},
  year={2010},
  publisher={Springer}
}

@inproceedings{mohammed2020machine,
  title={Machine learning with oversampling and undersampling techniques: overview study and experimental results},
  author={Mohammed, Roweida and Rawashdeh, Jumanah and Abdullah, Malak},
  booktitle={2020 11th International Conference on Information and Communication Systems (ICICS)},
  pages={243--248},
  year={2020},
  organization={IEEE}
}

@article{cheng2021improved,
  title={Improved autoencoder for unsupervised anomaly detection},
  author={Cheng, Zhen and Wang, Siwei and Zhang, Pei and Wang, Siqi and Liu, Xinwang and Zhu, En},
  journal={International Journal of Intelligent Systems},
  volume={36},
  number={12},
  pages={7103--7125},
  year={2021},
  publisher={Wiley Online Library}
}

@article{ibidunmoye2017adaptive,
  title={Adaptive anomaly detection in performance metric streams},
  author={Ibidunmoye, Olumuyiwa and Rezaie, Ali-Reza and Elmroth, Erik},
  journal={IEEE Transactions on Network and Service Management},
  volume={15},
  number={1},
  pages={217--231},
  year={2017},
  publisher={IEEE}
}

@article{kiet2025statistical,
  title={Statistical Inference for Autoencoder-based Anomaly Detection after Representation Learning-based Domain Adaptation},
  author={Kiet, Tran Tuan and Loi, Nguyen Thang and Duy, Vo Nguyen Le},
  journal={arXiv preprint arXiv:2508.07049},
  year={2025}
}

@article{van2021anomaly,
  title={An anomaly detection approach to identify chronic brain infarcts on MRI},
  author={Van Hespen, Kees M and Zwanenburg, Jaco JM and Dankbaar, Jan W and Geerlings, Mirjam I and Hendrikse, Jeroen and Kuijf, Hugo J},
  journal={Scientific Reports},
  volume={11},
  number={1},
  pages={7714},
  year={2021},
  publisher={Nature Publishing Group UK London}
}

@article{mclachlan2014number,
  title={On the number of components in a Gaussian mixture model},
  author={McLachlan, Geoffrey J and Rathnayake, Suren},
  journal={Wiley Interdisciplinary Reviews: Data Mining and Knowledge Discovery},
  volume={4},
  number={5},
  pages={341--355},
  year={2014},
  publisher={Wiley Online Library}
}

@inproceedings{karimi2016linear,
  title={Linear convergence of gradient and proximal-gradient methods under the polyak-{\l}ojasiewicz condition},
  author={Karimi, Hamed and Nutini, Julie and Schmidt, Mark},
  booktitle={Joint European Conference on Machine Learning and Knowledge Discovery in Databases},
  pages={795--811},
  year={2016},
  organization={Springer}
}

@article{bottou2018optimization,
  title={Optimization methods for large-scale machine learning},
  author={Bottou, L{\'e}on and Curtis, Frank E and Nocedal, Jorge},
  journal={SIAM Review},
  volume={60},
  number={2},
  pages={223--311},
  year={2018},
  publisher={SIAM}
}

@article{tseng2001convergence,
  title={Convergence of a block coordinate descent method for nondifferentiable minimization},
  author={Tseng, Paul},
  journal={Journal of Optimization Theory and Applications},
  volume={109},
  number={3},
  pages={475--494},
  year={2001},
  publisher={Springer}
}

@article{bolte2014proximal,
  title={Proximal alternating linearized minimization for nonconvex and nonsmooth problems},
  author={Bolte, J{\'e}r{\^o}me and Sabach, Shoham and Teboulle, Marc},
  journal={Mathematical Programming},
  volume={146},
  number={1},
  pages={459--494},
  year={2014},
  publisher={Springer}
}

@article{mordvintsev2015inceptionism,
  title={Inceptionism: Going deeper into neural networks},
  author={Mordvintsev, Alexander and Olah, Christopher and Tyka, Mike},
  journal={Google research blog},
  volume={20},
  number={14},
  pages={5},
  year={2015}
}

@article{Frankel_2014,
   title={Splitting Methods with Variable Metric for Kurdyka–Łojasiewicz Functions and General Convergence Rates},
   volume={165},
   ISSN={1573-2878},
   url={http://dx.doi.org/10.1007/s10957-014-0642-3},
   DOI={10.1007/s10957-014-0642-3},
   number={3},
   journal={Journal of Optimization Theory and Applications},
   publisher={Springer Science and Business Media LLC},
   author={Frankel, Pierre and Garrigos, Guillaume and Peypouquet, Juan},
   year={2014},
   month=sep, pages={874–900}
   }

@article{zhou2021vae,
  title={VAE-based deep SVDD for anomaly detection},
  author={Zhou, Yu and Liang, Xiaomin and Zhang, Wei and Zhang, Linrang and Song, Xing},
  journal={Neurocomputing},
  volume={453},
  pages={131--140},
  year={2021},
  publisher={Elsevier}
}

@inproceedings{zong2018deep,
  title={Deep autoencoding gaussian mixture model for unsupervised anomaly detection},
  author={Zong, Bo and Song, Qi and Min, Martin Renqiang and Cheng, Wei and Lumezanu, Cristian and Cho, Daeki and Chen, Haifeng},
  booktitle={International conference on learning representations},
  year={2018}
}

@inproceedings{abadi2016deep,
  author    = {Abadi, Mart{\'i}n and Chu, Andy and Goodfellow, Ian
               and McMahan, H. Brendan and Mironov, Ilya and Talwar, Kunal
               and Zhang, Li},
  title     = {Deep Learning with Differential Privacy},
  booktitle = {Proceedings of the 2016 ACM SIGSAC Conference on
               Computer and Communications Security},
  pages     = {308--318},
  publisher = {ACM},
  year      = {2016},
  doi       = {10.1145/2976749.2978318}
}

@inproceedings{mcmahan2018learning,
  author    = {McMahan, H. Brendan and Ramage, Daniel and Talwar, Kunal
               and Zhang, Li},
  title     = {Learning Differentially Private Recurrent Language Models},
  booktitle = {International Conference on Learning Representations},
  year      = {2018}
}

@inproceedings{tan2022fedproto,
  author    = {Tan, Yue and Long, Guodong and Liu, Lu and Zhou, Tianyi
               and Lu, Qinghua and Jiang, Jing and Zhang, Chengqi},
  title     = {{FedProto}: Federated Prototype Learning across
               Heterogeneous Clients},
  booktitle = {Proceedings of the AAAI Conference on Artificial Intelligence},
  volume    = {36},
  pages     = {8432--8440},
  year      = {2022},
  doi       = {10.1609/aaai.v36i8.20819}
}

@article{sattler2020communication,
  author  = {Sattler, Felix and Marban, Arturo and Rischke, Roman
             and Samek, Wojciech},
  title   = {Communication-Efficient Federated Distillation},
  journal = {arXiv preprint arXiv:2012.00632},
  year    = {2020}
}

@inproceedings{wang2020tackling,
  author    = {Wang, Jianyu and Liu, Qinghua and Liang, Hao and Joshi, Gauri and Poor, H. Vincent},
  title     = {Tackling the Objective Inconsistency Problem in Heterogeneous Federated Optimization},
  booktitle = {Advances in Neural Information Processing Systems},
  volume    = {33},
  year      = {2020}
}

\onecolumn
\appendix

% Enable appendix section numbering: A, B, C, ...
\setcounter{secnumdepth}{1}

% Restore the normal contents-writing command
\makeatletter
\let\addcontentsline\AAAIOriginalAddContentsLine
\makeatother

\startcontents[appendix]

\printcontents[appendix]{l}{1}{%
  \section*{Appendix Contents}
  \setcounter{tocdepth}{2}
}

\clearpage
\newpage
\section{Implementation Details and Experimental Protocols}
\label{app:implementation-details}

This appendix specifies the shared experimental settings and the method-specific implementations used in the reported evaluations. Data partitions, architectures, optimization settings, training budgets, and participation schedules are matched where applicable.

\subsection{Common Experimental Settings}
\label{app:common-settings}

\paragraph{Architectures and optimization.}
For tabular datasets, we use a feed-forward autoencoder with encoder dimensions $d_x\!\to\!256\!\to\!64\!\to\!16$ and a symmetric decoder, ReLU activations, and a linear output layer. For CIFAR-10 and Fashion-MNIST, we use a three-layer convolutional encoder with 128, 128, and 16 output channels and a symmetric decoder; the flattened latent dimensions are 1,024 and 784, and the models contain 339,219 and 334,609 parameters, respectively. All methods use Adam with learning rate $10^{-3}$, mini-batch size 50, StepLR step size 1,000 and multiplier 0.1, and three random seeds $\{100,200,300\}$.
For both vision datasets, seeds 100, 200, and 300 select normal/anomaly class pairs $2/8$, $0/4$, and $9/5$; CIFAR-10 uses 5,000 normal training samples (250 per client), while Fashion-MNIST uses 6,000 (300 per client).
Thus, for the vision benchmarks, each seed jointly specifies the random state and the normal/anomaly class pair. The variation across the three runs therefore reflects both stochastic variation and variation across these three task instantiations rather than repeated training on one fixed class pair.

\paragraph{Training budget and participation.}
Training uses $R=100$ global rounds. GCA performs $T_{\mathrm{recon}}=5$ reconstruction epochs followed by $T_{\mathrm{align}}=5$ alignment epochs per round. Single-Client, Centralized, FedAvg, FedProx, FedNova, and DP-FedAvg use 10 local epochs per round or the centralized epoch-equivalent. Unless otherwise stated, all decentralized methods use full client participation.

\paragraph{Data preprocessing and anomaly decisions.}
Training uses normal data only. Each client scores a sample using reconstruction error and sets its decision threshold to the $0.75$ quantile of its training reconstruction errors. Tabular preprocessing removes features with Pearson correlation greater than $0.6$ and applies standard normalization fitted on the normal training data.

\subsection{Data Partitioning and Configuration Selection}
\label{app:data-partitioning}

\paragraph{Uniform-shard setting.}
For the primary IID experiments, the normal training data are randomly partitioned into disjoint uniform shards. The same realized partition is reused by every method for a given dataset and random seed.

\paragraph{Non-IID Dirichlet setting.}
For the heterogeneous experiments, client preferences are sampled from a Dirichlet distribution with concentration parameter $\alpha=0.1$. Normal samples are represented by feature descriptors and grouped into $\min(20,C)$ groups using MiniBatchKMeans with 10 initializations, at most 200 iterations, and batch size $\min(2{,}048,N)$; group-specific client preferences are then allocated under long-tail quotas with a target maximum-to-minimum ratio of 10:1. Tabular descriptors are the standardized features, whereas vision descriptors are normalized pixels adaptively pooled to $8\times8$; only the five tabular non-IID results are reported. The same realized partition is reused by every method for a given dataset and seed. Because this experiment changes client data distributions rather than client feature dimensions, we refer to it as non-IID statistical heterogeneity.

\paragraph{Accuracy reporting and GCA component-count selection.}
GCA is evaluated after each reconstruction and alignment epoch, whereas the baselines are evaluated after each round. For each seed, we report the highest reconstruction-epoch test accuracy for GCA and the highest round-level test accuracy for each baseline. For GCA-GMM, $K^\star$ maximizes the three-seed mean over $K\in\{10,20,40,80\}$, with exact ties resolved in favor of the smaller $K$. The same dataset-specific $K^\star$ is used for the main IID comparison and the extraction evaluation, while communication is reported for every tested $K$. The weighting diagnostic reports the highest alignment-epoch accuracy at this $K^\star$.

\subsection{GCA Implementation}
\label{app:gca-implementation}

\paragraph{Local reconstruction and latent upload.}
Each participating client first updates both its encoder and decoder using reconstruction loss. It then uploads a uniformly sampled subset of its normal latent codes, with upload fraction $\rho_i$ given in the main experimental setup. No raw data, model parameters, or gradients are uploaded.

\paragraph{Latent-coordinate compatibility.}
For each dataset and seed, clients use the same encoder architecture, latent dimension, and preprocessing. The experiments therefore assume that the resulting latent coordinates are sufficiently compatible for pooled clustering. Arbitrary client-specific permutations or transformations of the latent space are not considered.

\paragraph{Server clustering and broadcast.}
The server pools the uploaded latent codes and fits one of four clustering backends: GMM, K-means, Soft K-means, or Mini-Batch K-means. For every backend, $K\in\{10,20,40,80\}$. The server broadcasts all $K$ returned centroids and their support counts. For a GMM, $N_k=\sum_n\gamma_{nk}$ is the effective support count; for hard clustering, $N_k$ is the cluster size.

For GMM, component means are initialized with Torch K-means++. Tabular datasets use a Torch full-covariance GMM with regularization $10^{-6}$ and one initialization, while vision datasets use a Torch diagonal-covariance GMM with regularization $0.1$ and 100 initializations. EM runs for at most 200 iterations with tolerance $10^{-3}$. K-means uses 10 initializations; Soft K-means uses temperature 1.0 after K-means initialization with 10 initializations; and Mini-Batch K-means uses batch size 4,096 and 10 initializations.

\paragraph{Inverse-support centroid alignment.}
Each client assigns its local codes to the nearest centroid and scales each per-sample alignment term using an inverse-support weight derived from the assigned centroid's effective count. Let
\[
\begin{aligned}
\widehat N_k&=\max(N_k,\epsilon),
&\bar N&=\frac{1}{K}\sum_{k=1}^{K}\widehat N_k,\\
\widetilde w_k
&=\min\!\left\{w_{\max},
\left(\frac{\bar N}{\widehat N_k}\right)^\gamma\right\},\\
w_k
&=\frac{\widetilde w_k}
{\frac{1}{K}\sum_{j=1}^{K}\widetilde w_j+\epsilon},
&\gamma&=1.0,\quad w_{\max}=6.0,\quad\epsilon=10^{-6}.
\end{aligned}
\]
The normalization gives $K^{-1}\sum_k w_k\approx1$. Alignment updates only the encoder, while the decoder remains local and inference continues to use reconstruction-error thresholding.

\paragraph{Uniform-weight GMM ablation.}
To isolate the contribution of inverse-support weighting, we compare the default GCA-GMM method with a uniform-weight variant in which $w_k=1$ for every nonempty centroid. Both variants use the same $K\in\{10,20,40,80\}$ sweep, data partitions, latent uploads, GMM fitting, nearest-centroid assignments, training schedule, and optimization settings. This ablation is limited to the GMM backend.

\subsection{Baseline Implementations}
\label{app:baseline-implementations}

\paragraph{Single-Client.}
Each client trains an autoencoder only on its local normal data, with no server communication or aggregation. The architecture, optimizer, and total local epoch budget match the decentralized methods. Reported performance is averaged across clients.

\paragraph{Centralized.}
A single autoencoder is trained on the pooled normal training data using the same architecture and optimizer. Its training budget is matched to the epoch-equivalent used by the decentralized methods. Centralized training is used as a reference and is not a privacy-preserving deployment.

\paragraph{FedAvg.}
Clients train the complete autoencoder locally and upload model parameters or parameter deltas. The server aggregates client updates using local-sample-size weighting and broadcasts the resulting global autoencoder~\cite{mcmahan2017communication}.

\paragraph{FedProx.}
FedProx follows the FedAvg communication protocol and adds a proximal penalty to the complete autoencoder parameters~\cite{li2020federated}.
The local objective is
\[
\mathcal{L}^{(i)}_{\mathrm{FedProx}}
=
\mathcal{L}^{(i)}_{\mathrm{recon}}
+
10^{-4}
\left\|
\theta_i-\theta^{(r)}
\right\|_2^2,
\]
where the effective coefficient $10^{-4}$ results from the configured $\mu=10^{-2}$ under the implemented loss scaling.

\paragraph{FedNova.}
FedNova uses the same local autoencoder, optimizer, participation schedule, and local epoch budget as FedAvg. Each client transmits a normalized local update, and the server applies normalized aggregation following FedNova~\cite{wang2020tackling}. With 10 local epochs, $\tau_i=10n_i^{\mathrm{batch}}$, and the effective normalization is $\tau_{\mathrm{eff}}=\sum_i p_i\tau_i$, where $p_i$ is the sample-size weight of client $i$.

\paragraph{DP-FedAvg.}
DP-FedAvg follows the FedAvg training and communication protocol while clipping each client's full model delta to global $\ell_2$ norm 1.0 at the server. After sample-size-weighted aggregation, the server adds independent Gaussian noise to each aggregate tensor with standard deviation $\sigma/C$, where $\sigma=1.0$ and $C$ is the number of clients. Because the current implementation does not include a complete sampling and composition accountant, this configuration is treated as a noisy-update utility and extraction baseline rather than a certified $(\varepsilon,\delta)$-DP mechanism~\cite{abadi2016deep,mcmahan2018learning}.

\begin{table*}[t]
\centering
\papertablesize
\caption{Implementation summary for all compared approaches.}
\label{tab:all-method-implementations}
\begin{tabularx}{\textwidth}{@{}lXXXl@{}}
\toprule
Method & Client objective & Communicated object & Server operation & Key setting \\
\midrule
Single-Client & Reconstruction & None & None & 10 local epochs \\
Centralized & Reconstruction & Pooled training data & Central training & Epoch-equivalent budget \\
FedAvg & Reconstruction & AE model update & Sample-size-weighted averaging & 10 local epochs \\
FedProx & Reconstruction plus proximal term & AE model update & Sample-size-weighted averaging & Configured $\mu=10^{-2}$; effective coefficient $10^{-4}$ \\
FedNova & Reconstruction & Normalized AE update & Normalized aggregation & $\tau_i=10n_i^{\mathrm{batch}}$; sample-size-weighted normalization \\
DP-FedAvg & Reconstruction & Clipped and noisy AE update & FedAvg aggregation & clip 1.0; $\sigma=1.0$ \\
GCA & Reconstruction plus centroid alignment & Latent codes / centroid statistics & Clustering & $K\in\{10,20,40,80\}$ \\
GCA-GMM (uniform weights) & Reconstruction plus centroid alignment & Latent codes / centroid statistics & GMM clustering & $w_k=1$; $K\in\{10,20,40,80\}$ \\
\bottomrule
\end{tabularx}
\end{table*}

\subsection{Server-Side Extraction Attack Implementations}
\label{app:attack-implementations}

\paragraph{Parameter-sharing baselines.}
For FedAvg, FedProx, FedNova, and DP-FedAvg, the server has white-box access to one victim client's selected round-100 autoencoder and performs no additional local update. Using the same 100 attack initializations across all methods, the server minimizes reconstruction error with Adam at learning rate $10^{-2}$ for at most 1000 iterations. Each initialization uses independent best-state tracking and stops after 10 consecutive steps with relative improvement below $10^{-3}$. Vision inputs are projected to $[-1,1]$, whereas standardized tabular inputs are left unbounded.

\paragraph{GCA latent-only attack.}
Starting from the same client's round-100 checkpoint, GCA performs five reconstruction epochs and uploads the configured latent subset. The server trains an independently initialized surrogate autoencoder using only these latent codes, with Adam at learning rate $10^{-2}$, batch size 50, a 10\% validation split, at most 1000 epochs, minimum relative improvement $10^{-4}$, and patience 20. It then applies the same 100-start input-space optimization, early-stopping rule, and input-domain constraints used for the parameter-sharing methods. The surrogate receives no victim parameters, gradients, or raw records.

\paragraph{Metrics.}
The reported attack output is $T_i(\x^\star)$ for the parameter-sharing methods and $\widetilde T_i(\x^\star)$ for GCA, rather than the optimized input $\x^\star$ itself. For every target in the evaluated client's training shard, we select its MSE-nearest output and report normalized target MSE and excess cosine similarity relative to 100 normal records excluded from training. Cosine similarity is computed using the same MSE-selected output. Means and standard deviations are reported over the same three seed configurations used in the performance evaluation.

\subsection{Deterministic Communication Accounting}
\label{app:efficiency-protocol}

We compare the deterministic per-round tensor payload of GCA-GMM for every tested $K\in\{10,20,40,80\}$ with FedAvg, FedProx, FedNova, and DP-FedAvg. Let $C$ denote the number of clients, $U$ the total uploaded latent codes, $D_z$ the flattened latent dimension, and $P$ the complete autoencoder parameter count. All communicated tensors are FP32. GCA uploads $UD_z$ latent scalars and broadcasts $K(D_z+1)$ centroid-and-count scalars to each client; each FL baseline uploads and broadcasts $P$ model scalars per client. The totals include every client copy but exclude transport headers, serialization overhead, acknowledgements, checkpoint I/O, and optimizer state. This accounting reports per-round tensor payload only and does not measure runtime or total communication to a target accuracy.

\subsection{Computing Environment}
\label{app:computing-environment}
Experiments were executed on NVIDIA H100 and H200 GPUs. The same numerical configurations and software environment were used across accelerator types.

\subsection{Experimental Controls}
\label{app:reproducibility}

For a given dataset and seed configuration, every method uses the same training and test data and the same client partition. Autoencoder capacity, local optimizer, batch size, local epoch budget, and client participation are matched where applicable. GCA-GMM uses the same selected $K^\star$ in the main performance and extraction evaluations, while communication is reported for every tested $K$. All methods use the same three seed configurations; for the vision datasets, these configurations also specify the normal/anomaly class pairs described above.

\clearpage
\newpage
\section{Positioning Relative to Prototype-Based FL and
Differentially Private AE-FL}
\label{app:gca-vs-prototype-fl}

\subsection{Class-Prototype Aggregation versus GCA}
FedProto communicates class-conditioned feature prototypes~\cite{tan2022fedproto}. For each observed class, a client computes a local prototype as the mean of the corresponding sample embeddings.
The server then aggregates prototypes having the same class label, using their local class support, and returns global class prototypes. Each client optimizes a supervised classification loss together with a regularization term that moves its local class prototypes toward the corresponding global ones.

GCA shares only the broad idea of communicating representation-space summaries. Unlike FedProto, it operates on unlabeled normal data, uploads sampled latent codes rather than class-indexed means, and discovers round-specific geometric modes by clustering. Clients align their encoder outputs with the resulting centroids using inverse-support weights; decoders remain local, and inference uses reconstruction error.

\begingroup
\papertablesize
\setlength{\tabcolsep}{4pt}
\renewcommand{\arraystretch}{1.15}
\begin{papertableblock}
\papertablecaption{
Methodological distinction between the original supervised FedProto formulation~\cite{tan2022fedproto} and GCA.
}
\label{tab:gca-vs-prototype-fl}
\begin{tabularx}{\columnwidth}{@{}p{0.18\columnwidth}XX@{}}
\toprule
Dimension
&
FedProto-style class-prototype FL
&
GCA
\\
\midrule

Learning task
&
Supervised classification
&
One-class, unlabeled anomaly detection
\\

Semantic identity
&
Class labels identify corresponding prototypes across clients
&
No labels; latent modes are discovered from the pooled codes
\\

Client uplink
&
One local mean embedding for each observed class
&
Sampled individual latent codes from normal data
\\

Server operation
&
Aggregate local prototypes that share the same class label
&
Fit a clustering model to pooled latent codes
\\

Server downlink
&
Global class prototypes
&
Global latent centroids and effective support counts
\\

Use of support
&
Local class counts weight the aggregation of same-label prototypes
&
Inverse global support scales local alignment through
$w_k\propto 1/N_k$ with mean-one normalization
\\

Client objective
&
Classification loss plus class-prototype regularization
&
AE reconstruction loss plus encoder-only centroid alignment
\\

Model requirements
&
Shared class semantics and a compatible prototype dimension;
local architectures may differ
&
A common latent dimension and sufficiently compatible latent
coordinates; local decoders remain independent
\\

Inference
&
Classifier output or distance to class prototypes
&
Reconstruction error with a training-error quantile threshold
\\

Server-visible information
&
Class-level feature means; no model parameters or gradients in the
standard FedProto protocol
&
Sampled latent codes in the uplink and centroid statistics in the
downlink; no model parameters or gradients
\\

\bottomrule
\end{tabularx}
\end{papertableblock}
\endgroup

The original FedProto formulation requires class identities and is therefore not directly applicable to GCA's unlabeled, one-class setting. GCA avoids parameter and gradient exchange, but its sampled latent codes remain information-bearing and are evaluated under the latent-only extraction threat model. Pooling latent codes further assumes a common latent dimension and sufficiently compatible latent coordinates across clients; centroid communication alone does not guarantee robustness to arbitrary feature-space heterogeneity.

\subsection{Differential Privacy versus Withholding the AE
Reconstruction Map}
Differential privacy and communication architecture address different exposure mechanisms. DP-SGD clips individual contributions, adds calibrated noise, and accounts for privacy loss across optimization steps~\cite{abadi2016deep}. User-level DP-FL similarly clips and perturbs client contributions while composing privacy loss across communication rounds~\cite{mcmahan2018learning}. When the protected unit, adjacency relation, clipping rule, sampling process, noise mechanism, and privacy accountant are fully specified, these methods provide a formal bound on how much the distribution of the released model can change when one protected record or client is modified.

Such a guarantee does not, however, remove the trained AE's reconstruction map from the server-visible interface when the final encoder and decoder are communicated. The AE is still optimized so that $D(E(\mathbf{x}))\approx\mathbf{x}$ on normal training data, and a server possessing the model can directly evaluate $D\circ E$ or optimize candidate inputs toward low reconstruction error. A finite $(\varepsilon,\delta)$ guarantee does not assert that the trained model can never generate a training-like reconstruction. Rather, DP limits the influence of the protected record or client on the released model. It may consequently reduce record-specific memorization and attack success, but stronger noise can also degrade the representation and reduce anomaly-detection utility. White-box reconstruction therefore remains a relevant empirical attack surface for a DP-trained AE whose complete model is visible to the server~\cite{ghoshal2025reverse}. 
A noise multiplier by itself is not a certified privacy guarantee. Computing an $(\varepsilon,\delta)$ budget additionally depends on the clipping granularity, sampling rate, number of rounds, client participation, adjacency definition, and accounting method~\cite{abadi2016deep,mcmahan2018learning}. The complete DP-FedAvg implementation, including clipping, noise, and accounting status, is specified together with all other approaches in Appendix~\ref{app:implementation-details}.

GCA makes a complementary systems choice. It withholds AE parameters and gradients from the server, thereby preventing direct white-box access to the client reconstruction maps, but it exposes sampled latent codes instead. GCA therefore does not eliminate privacy risk and does not provide a formal DP guarantee. Its privacy properties are evaluated under a latent-specific extraction threat model. In future work, a formal DP mechanism could be incorporated by clipping and perturbing the uploaded latent codes, privatizing the released centroid statistics, and accounting for their repeated release across communication rounds.

\clearpage
\newpage
\section{Theoretical Analysis}
\label{sec: analysis}
We analyze one global round of GCA for a client $i$ under the schedule of the \emph{Methodology} section. We set the number of local epochs per phase to one, i.e., $T_{\text{recon}}=T_{\text{align}}=1$. Thus, each round consists of exactly one full-batch gradient-descent step on the \emph{reconstruction} loss \eqref{eq:recon_loss} (encoder+decoder), full-latent code uploading (i.e., $\rho_i=1$), and one full-batch gradient-descent step on the \emph{alignment} loss \eqref{eq:align_loss} (encoder only). We establish a linear (geometric) decrease of each phase-specific objective~\cite{bottou2018optimization}, and then combine them to obtain a round-wise alternating gradient method convergence rate~\cite{tseng2001convergence, bolte2014proximal}.
We provide two analysis frameworks: first, a convex convergence analysis for which we show stable convergence towards a minimum; second, a non-convex convergence analysis for which we show convergence towards a critical point in finite time.

\begin{algorithm}[ht!]
\caption{Global Centroid Alignment (GCA).}
\label{alg:gpa}

\begin{algorithmic}[1]
\REQUIRE Clients $\{\mathcal{D}_i\}_{i=1}^C$, AEs $\{(E_i,D_i)\}_{i=1}^C$; rounds $R$; local epochs $E=T_{\text{recon}}+T_{\text{align}}$; batch size $B$; upload fractions $\{\rho_i\in(0,1]\}$; GMM components $K$; learning rates $\eta_{\text{recon}},\eta_{\text{align}}$; $\gamma=1.0$; $w_{\max}=6.0$; $\epsilon=10^{-6}$
\ENSURE Trained autoencoders; broadcast latent centroids and support counts 
\end{algorithmic}

% \begin{multicols}{2}
\begin{algorithmic}[1]
\STATE Initialize $\{\btheta_{E_i},\btheta_{D_i}\}$ for all clients
\FOR{$r=1$ to $R$} %\comment{Global round $r$}
  % ===== Phase 1: Reconstruction =====
   \STATE Server samples participating clients $\mathcal{S}_r$  
  \newline
  \STATE \textcolor{red}{\textbf{Phase 1: Reconstruction (client-side)}}
  \FORALL{clients $i\in\mathcal{S}_r$ \textbf{in parallel}}
    \FOR{$e=1$ to $T_{\text{recon}}$}
      \FORALL{mini-batches $B_i=\{\x_b\}_{b=1}^{B}\subset\mathcal{D}_i$}
        \STATE $\z_b \gets E_i(\x_b)$,\quad $\hat{x}_b \gets D_i(\z_b)$  %\Comment{$\z_b\in\mathbb{R}^{d_z}$}
        \STATE $\mathcal{L}_{\text{recon}}^{(i)} \gets \frac{1}{B}\sum_{b=1}^{B}\|\x_b-\hat{x}_b\|_2^2$
        \STATE $\btheta_{E_i}\gets \btheta_{E_i}-\eta_{\text{recon}}\nabla_{\btheta_{E_i}}\mathcal{L}_{\text{recon}}^{(i)}$
        \STATE $\btheta_{D_i}\gets \btheta_{D_i}-\eta_{\text{recon}}\nabla_{\btheta_{D_i}}\mathcal{L}_{\text{recon}}^{(i)}$
      \ENDFOR
    \ENDFOR
  \ENDFOR
  \FORALL{clients $i\in\mathcal{S}_r$ \textbf{in parallel}}
    \STATE Select index set $\mathcal{I}_i$ with $|\mathcal{I}_i|=\lfloor\rho_i N_i\rfloor$
    \STATE $\z_i^{\uparrow}\gets \{E_i(x):x\in\mathcal{D}_i,\ \text{indices}\ \mathcal{I}_i\}$  %\Comment{Latents only}
    \STATE Upload $\z_i^{\uparrow}$ to server
  \ENDFOR
  \newline
  % ===== Phase 2: Aggregation =====
  \STATE \textcolor{blue}{\textbf{Phase 2: Aggregation (server-side)}}
  \STATE $Z\gets \bigcup_{i\in\mathcal{S}_r} \z_i^{\uparrow}=\{\z_n\}_{n=1}^{N_r}$,\quad
          $N_r\gets \sum_{i\in\mathcal{S}_r} |\z_i^{\uparrow}|$
  \STATE Fit clustering algorithm with $Z$
  \STATE Compute counts $N_k$ and centroids $\mathbf{\mu}_k$
  \STATE Broadcast $\{\mathbf{\mu}_k,N_k\}_{k=1}^{K}$ to clients in $\mathcal{S}_r$
  \newline
  % ===== Phase 3: Alignment =====
  \STATE \textcolor{purple}{\textbf{Phase 3: Alignment (client-side, encoder-only)}}
  \FORALL{clients $i\in\mathcal{S}_r$ \textbf{in parallel}}
    \STATE $\widehat N_k\gets\max(N_k,\epsilon)$, $\bar N \gets \frac{1}{K}\sum_{k=1}^{K}\widehat N_k$
    \STATE $\tilde w_k\gets\min\{w_{\max},(\bar N/\widehat N_k)^\gamma\}$, $w_k\gets\frac{\tilde w_k}{\frac{1}{K}\sum_{j=1}^{K}\tilde w_j+\epsilon}$
    \FOR{$e=1$ to $T_{\text{align}}$}
      \FORALL{mini-batches $B_i=\{\x_b\}_{b=1}^{B}$}
        \STATE $\z_b \gets E_i(\x_b)$
        \STATE $a_b \gets \arg\min_{k\in\{1,\dots,K\}} \|\z_b-\mathbf{\mu}_k\|_2$  %\Comment{Nearest prototype}
        \STATE $\mathcal{L}_{\text{align}}^{(i)} \gets \frac{1}{B}\sum_{b=1}^{B} w_{a_b}\,\|\z_b-\mathbf{\mu}_{a_b}\|_2^2$
        \STATE $\btheta_{E_i}\gets \btheta_{E_i}-\eta_{\text{align}}\nabla_{\btheta_{E_i}}\mathcal{L}_{\text{align}}^{(i)}$  %\Comment{Encoder-only}
      \ENDFOR
    \ENDFOR
  \ENDFOR
\ENDFOR
\end{algorithmic}
% \end{multicols}
    
\end{algorithm}

\subsection{Convex Full-batch Convergence}\label{convex-conv}

\paragraph{Notation and Objectives.}
For client $i$, let $\theta_{E}\in\mathbb{R}^{p_E}$ and $\theta_{D}\in\mathbb{R}^{p_D}$ denote the parameters of the encoder $E_{\theta_E}(\cdot)$ and decoder $D_{\theta_D}(\cdot)$, respectively.
The reconstruction objective is $F(\theta_{E},\theta_{D})$ as in \eqref{eq:recon_loss}, and the alignment objective $G(\theta_{E};\mathcal{M},\mathbf{w})$ as in \eqref{eq:align_loss}.
%\harry{Have we defined $\mathbf{w}$ yet?}

\paragraph{Assumptions.}
We make smoothness/regularity assumptions to analyze alternating gradient methods around a local basin. \\
$\bullet$ \textbf{(A1) $L_F$-smoothness of $F$.} $F$ has $L_F$-Lipschitz continuous gradients in a neighborhood $\mathcal{B}$ of the iterates: 
\begin{equation*}
    \|\nabla F(\vartheta)-\nabla F(\vartheta')\|\le L_F\|\vartheta-\vartheta'\|
\end{equation*}
for all $\vartheta=(\theta_E,\theta_D),\vartheta'\in\mathcal{B}$.
\\
$\bullet$ \textbf{(A2) Polyak-{\L}ojasiewicz (P{\L}) condition for $F$.} There exists $\mu_F>0$ such that for all $(\theta_E,\theta_D)\in\mathcal{B}$:
\begin{equation*}
\frac{1}{2}\|\nabla F(\theta_E,\theta_D)\|^2 \ge \mu_F\Bigl(F(\theta_E,\theta_D)-F^\star\Bigr), \qquad F^\star\triangleq\inf_{\mathcal{B}}F.
\end{equation*} 
$\bullet$ \textbf{(A3) Fixed latent centroids and stable assignments (piecewise smoothness/P{\L}).}
Across all global rounds, treat $(\mathcal M,\mathbf w)$ as fixed constants during the alignment epoch. Let the nearest-centroid assignment be $a(\x)$ (cf.$a_b$ in \eqref{eq:nearest_latent}).
Assume there exists an open set (a fixed-assignment region)
\begin{equation*}
\mathcal R \triangleq \bigl\{\theta_E:a(\x;\theta_E)=a(\x;\theta_E^{+})\text{for all }x\in\mathcal D_i\bigr\}    
\end{equation*}
containing the encoder iterates during the alignment step.
On $\mathcal R$, the alignment objective $G(\cdot;\mathcal M,\mathbf w)$
is $L_G$-smooth and satisfies a P{\L} inequality with parameter $\mu_G>0$:
\begin{equation*}
\frac{1}{2}\bigl\|\nabla_{\theta_E} G(\theta_E;\mathcal M,\mathbf w)\bigr\|^2
\ge
\mu_G \bigl(G(\theta_E;\mathcal M,\mathbf w)-G^\star\bigr),
\end{equation*}
where \(G^\star \triangleq \min_{\theta_E\in\mathcal R} G(\theta_E;\mathcal M,\mathbf w)\)~\cite{karimi2016linear}. \\
% \textcolor{blue}{[C: What is the fixed-assignment region? $G^\star$ should change depending on the latent codes that get sent to the server (which you are explicitly saying do change from round to round). Or is $G^\star$ defined with respect to some optimal latent codes that do not change from round to round?]} \harry{My understanding is that G* is interpreted as the minimum within an area such as cluster assignments of a given set of latent codes would not change. After further thought, it might not be the most reasonable formulation since \label{eq:PL-G} is not dependent on the latents at any step. Alternatively, we can start with a set $(\mathcal{M}^\star, \mathbf{w}^*)$ redefined as the set of optimal GMM centroids when transforming the data distribution from $\mathbb{R}^{d_x}\to\mathbb{R}^{d_z}$. Or putting the centroids of the data distribution in space $\mathbb{R}^{d_x}$ directly through $E(x_b;\theta_E)$ to get our $\mu_k^*$.} \\
$\bullet$ \textbf{(A4) Bounded, mean-one weights.}
 The inverse-count weights are normalized as in \eqref{eq:weights}, satisfy $\frac{1}{K}\sum_k w_k=1$, and are bounded: 
\begin{equation*}
    0<w_{\min}\le w_k\le w_{\max}<\infty.
\end{equation*}
%(This prevents any single prototype from dominating the gradient scale.)
$\bullet$ \textbf{(A5) Fixed latent centroids and uniform latent proximity.}
There exists a fixed latent-centroid set $\mathcal{M}=\{\mu_k\}_{k=1}^K$ such that
\begin{equation*}
\sup_{\mathbf{x}}\min_{k}\big\|E_{\theta_E}(\x)-\mu_k\big\| \le\varepsilon_\mu.
\end{equation*}
That is, every encoding lies within an $\varepsilon_\mu$-neighborhood of some centroid; if the minimum separation
$\gamma\triangleq \min_{k\neq j}\|\mu_k-\mu_j\|$ satisfies $\varepsilon_\mu<\gamma/2$, nearest–centroid assignments are unique and stable (consistent with (A3)).

\paragraph{Phase-wise descent.}
For any $L$-smooth $H$ and update $\theta^+ = \theta - \eta g(\theta)$, yields the standard linear (geometric) one-step decrease
\begin{equation}
\begin{aligned}
H&(\theta^+) - H^\star \le (1-\eta^{(r)}\mu)\big(H(\theta)-H^\star\big),   \qquad 0<\eta^{(r)}\le \frac{1}{Lr}.
\label{eq:exact-descent-PL}
\end{aligned}
\end{equation}

\begin{lemma}[Reconstruction phase contraction]
\label{lem:recon}
For the full-batch GD update
$
\Theta^{+} = \Theta - \eta_{\text{recon}}^{(r)}\nabla F(\Theta),
$
one step satisfies
\begin{equation*}
F(\Theta^{+})-F^\star \le (1-\eta_{\text{recon}}^{(r)}\mu_F)\big(F(\Theta)-F^\star\big),
\end{equation*}
where $\Theta$ is the start-of-round parameter and $\Theta^{+}$ is the parameter after the reconstruction step. \\
\noindent\emph{Proof.}
Apply \eqref{eq:exact-descent-PL} to $H=F$, $L=L_F$, $\mu=\mu_F$. \hfill  $\square$ \end{lemma}

\begin{lemma}[Alignment phase contraction with nearest assignments and mean-one weights]
\label{lem:align}
Fix the broadcast $(\mathcal{M},\mathbf{w})$ within the round and assuming (A3)-(A5), the encoder-only update satisfies
\begin{equation}
\begin{aligned}
&G(\theta_E^{++};\mathcal{M},\mathbf{w})-G^\star \le
(1-\eta_{\text{align}}^{(r)}\mu_G)\bigl(G(\theta_E^{+};\mathcal{M},\mathbf{w})-G^\star\bigr) 
+ c_1 \eta_{\text{align}}^{(r)} \varepsilon_\mu^2,
\label{eq:align-one-step}
\end{aligned}
\end{equation}
where $c_1$ is only bounded by (A3)–(A4); one may use $c_1=\mu_G w_{\max}$. \\
\noindent\emph{Proof.}
On the fixed-assignment region from (A3), $G(\cdot;\mathcal M,\mathbf w)$ is $L_G$-smooth and $\mu_G$-P{\L}, hence the exact gradient step yields
\[
G(\theta_E^{++})-G^\star
 \le 
(1-\eta_{\text{align}}^{(r)}\mu_G)\bigl(G(\theta_E^{+})-G^\star\bigr).
\]
By (A5), every encoding lies within $\varepsilon_\mu$ of some centroid, hence
\begin{equation*}
\begin{aligned}
G&(\theta_E;\mathcal M,\mathbf w)  =\frac{1}{|\mathcal D_i|}\sum_x w_{a(x)}\|E_{\theta_E}(x)-\mu_{a(x)}\|^2
\le w_{\max}\varepsilon_\mu^2
\end{aligned}
\end{equation*}
for all iterates in the region; in particular, \(G^\star\le w_{\max}\varepsilon_\mu^2\).
Adding the nonnegative slack \(\eta_{\text{align}}^{(r)}\mu_G G^\star\) to the right-hand side and using the bound on \(G^\star\) gives \eqref{eq:align-one-step}. \hfill  $\square$ \end{lemma} 
% \noindent\emph{Proof.}
% View the alignment step as an \emph{inexact} gradient step for the population objective
% $H(\theta_E)\triangleq G(\theta_E;\mathcal{M},\mathbf{w})$, taken in the direction
% $-\nabla_{\theta_E}G(\theta_E;\mathcal{M},\mathbf{w})$. Let
% \[
% \delta(\theta_E)\triangleq \nabla_{\theta_E}G(\theta_E;\mathcal{M},\mathbf{w})
% -\nabla_{\theta_E}G(\theta_E;\mathcal{M}^\star,\mathbf{w}).
% \]
% On the stable-assignment cell, using the chain rule,
% \[
% \delta(\theta_E)
% =\frac{2}{|\mathcal{D}_i|}\sum_x w_{a(x)} J_E(x;\theta_E)^\top\big(\mu^\star_{a(x)}-\mu_{a(x)}\big),
% \]
% hence $\|\delta(\theta_E)\|\le 2w_{\max}J_{\max}\varepsilon_\mu =: c_\delta \varepsilon_\mu$ by (A4) and bounded encoder Jacobian \textcolor{blue}{[C: Boundedness of the encoder Jacobian needs to be explicitly stated as an assumption.]}.
% Applying Young's inequality to the cross term and then the P{\L} inequality for $H$ yields
% \[
% H(\theta_E^{++})-H^\star
% \le (1-\eta_{\text{align}}^{(r)}\mu_G)\big(H(\theta_E^{+})-H^\star\big)
% + \eta_{\text{align}}^{(r)}\|\delta(\theta_E^{+})\|^2,
% \]
% which, using $\|\delta\|^2\le c_\delta^2\varepsilon_\mu^2$ and setting $c_1:=c_\delta^2$, gives \eqref{eq:align-one-step}~\cite{devolder2014first}.

\paragraph{Across-Round Convergence of the Two-Phase Schedule.}
\label{subsec:round-convergence}
We now show that iterating GCA rounds converges, even though each round alternates between two different objectives.

Along with (A1)-(A5), assume the local curvature conditions in the basin of interest: \\
\noindent$\bullet$ \textbf{(A6)} $F$ is $\mu_F$-strongly convex and $L_F$-smooth on the basin containing the iterates. \\
\noindent$\bullet$ \textbf{(A7)} With fixed latent assignments, $G(\cdot;\mathcal M,\mathbf w)$ is $\mu_G$-strongly convex
and $L_G$-smooth in $\theta_E$. 
%\textcolor{blue}{[C: Does this mean the assignments are fixed and $\mathcal{M}$ does not change across rounds? Then you do not need to consider $\mathcal{M}^*$ and can remove Lemma 2.]} \harry{I think assignments not changing across rounds is going to need some further thinking about how we handle situations where a latent is ``mis-categorized'' from its underlying generating distribution}\\
Under (A6), for any such $H\in\{F,G\}$ we have the gradient-suboptimality sandwich
\begin{equation}
2\mu_H\big(H(\theta)-H^\star\big)\le\|\nabla H(\theta)\|^2\le2L_H\big(H(\theta)-H^\star\big).
\label{eq:grad-sandwich}
\end{equation}
Let $\Theta \triangleq (\theta_E,\theta_D)$ and define suboptimalities
$\Delta_F(\Theta) \triangleq F(\Theta)-F^\star$ and
$\Delta_G(\theta_E) \triangleq G(\theta_E;\mathcal M,\mathbf w)-G^\star$.
Within one round, recall:
\begin{equation*}
\begin{aligned}
&\Theta \xrightarrow[\text{recon}]{\eta_{\text{recon}}^{(r)}}\Theta^+=(\theta_E^+,\theta_D^+) \quad
\text{ and } \quad
\Theta^+ \xrightarrow[\text{align}]{\eta_{\text{align}}^{(r)}}\Theta^{++}=(\theta_E^{++},\theta_D^+)
.
\end{aligned}
\end{equation*}
\begin{lemma}[Effect of a reconstruction step on $G$]
\label{lem:G-after-F}
Run one full-batch GD step on $F$. Then, for any $\zeta>0$,
\begin{equation}
\begin{aligned}
\Delta_G(\theta_E^+) \le&
\Delta_G(\theta_E) 
 +  \eta_{\text{recon}}^{(r)}\frac{L_G}{\zeta} \Delta_G(\theta_E) 
 +  \eta_{\text{recon}}^{(r)} \left(\zeta L_F\right)\Delta_F(\Theta) 
 + O((\eta_{\text{recon}}^{(r)})^2).
\label{eq:G-after-F-explicit}
\end{aligned}
\end{equation}
where $\Delta_G(\theta_E) \triangleq G(\theta_E;\mathcal M,\mathbf w)-G^\star$. \\
\noindent\emph{Proof.} By $L_G$-smoothness on the encoder coordinate,
\[
G(\theta_E^+) \le G(\theta_E) + \big\langle \nabla_{\theta_E}G(\theta_E), \theta_E^+-\theta_E\big\rangle
+ \frac{L_G}{2}\|\theta_E^+-\theta_E\|^2.
\]
Since $\theta_E^+-\theta_E=-\eta_{\text{recon}}^{(r)}\nabla_{\theta_E}F(\Theta)$, apply Young's inequality to the cross term and use \eqref{eq:grad-sandwich} and $\|\nabla_{\theta_E}F\|\le\|\nabla F\|$ to obtain \eqref{eq:G-after-F-explicit}. \hfill $\square$ \end{lemma}

\begin{theorem}[Effect of an alignment step on $F$]
\label{lem:F-after-G-fixed}
Run one encoder-only alignment step 
\begin{equation*}
\theta_E^{++}=\theta_E^{+}-\eta_{\text{align}}^{(r)}\nabla_{\theta_E}G(\theta_E^{+};\mathcal M,\mathbf w).
\end{equation*}
Then, for any $\zeta'>0$,
\begin{equation}
\label{eq:F-after-G-explicit-fixed}
\begin{aligned}
\Delta_F(\Theta^{++})
& \le 
\Delta_F(\Theta^{+}) 
 + \eta_{\text{align}}^{(r)}\frac{L_F}{\zeta'} \Delta_F(\Theta^{+}) 
 + \eta_{\text{align}}^{(r)}\big(\zeta' L_G\big) \Delta_G(\theta_E^{+})
 + O((\eta_{\text{align}}^{(r)})^2),
\end{aligned}
\end{equation}
where $\Delta_F(\Theta) \triangleq F(\Theta)-F^\star$. \\
\noindent\emph{Proof.}
By $L_F$-smoothness of $F$ in the encoder coordinate,
\begin{equation*}
\begin{aligned}
F(\Theta^{++}) \le F(\Theta^{+}) 
&+\big\langle \nabla_{\theta_E}F(\Theta^{+}), \theta_E^{++}-\theta_E^{+}\big\rangle 
+\frac{L_F}{2}\|\theta_E^{++}-\theta_E^{+}\|^2.
\end{aligned}   
\end{equation*}
Substitute $\theta_E^{++}-\theta_E^{+}=-\eta_{\text{align}}^{(r)}\nabla_{\theta_E}G(\theta_E^{+})$ and apply Young's inequality to get
\begin{equation*}
\begin{aligned}
F(\Theta^{++})-F^\star
\le& \Delta_F(\Theta^{+})
+ \eta_{\text{align}}^{(r)}\frac{1}{2\zeta'}\|\nabla_{\theta_E}F(\Theta^{+})\|^2  + \eta_{\text{align}}^{(r)}\frac{\zeta'}{2}\|\nabla_{\theta_E}G(\theta_E^{+};\mathcal M,\mathbf w)\|^2
+ \frac{L_F}{2}(\eta_{\text{align}}^{(r)})^2\|\nabla_{\theta_E}G(\cdot)\|^2.
\end{aligned}
\end{equation*}
Using the gradient-suboptimality bounds (from strong convexity/smoothness in (A6)–(A7)),
\begin{equation*}
\begin{aligned}
& \|\nabla_{\theta_E}F(\Theta^{+})\|^2\le 2L_F\Delta_F(\Theta^{+}) \quad \text{ and } \quad \|\nabla_{\theta_E}G(\theta_E^{+})\|^2\le 2L_G\Delta_G(\theta_E^{+}),
\end{aligned}
\end{equation*}
yields \eqref{eq:F-after-G-explicit-fixed}. \hfill $\square$
\end{theorem}
For convenience, we write
$
\alpha_r \triangleq\eta_{\text{recon}}^{(r)}\frac{L_G}{\zeta}$, $
\beta_r \triangleq\eta_{\text{recon}}^{(r)}\zeta L_F$, $
\alpha_a \triangleq\eta_{\text{align}}^{(r)}\frac{L_F}{\zeta'}$, and $ 
\beta_a\triangleq\eta_{\text{align}}^{(r)}\zeta' L_G.
$
Then,
\begin{equation*}
\begin{aligned}
&\Delta_F(\Theta^{++}) \le 
\Big[1-\mu_F\eta_{\text{recon}}^{(r)}  + \alpha_a+ \beta_a\beta_r - \alpha_a\mu_F\eta_{\text{recon}}^{(r)}\Big]\Delta_F(\Theta) 
+\beta_a(1+\alpha_r)\Delta_G(\theta_E),\\
&\Delta_G(\theta_E^{++})\le 
\Big[1-\mu_G\eta_{\text{align}}^{(r)} + \alpha_r - \mu_G\eta_{\text{align}}^{(r)}\alpha_r\Big]\Delta_G(\theta_E) 
+\Big[\beta_r - \mu_G\eta_{\text{align}}^{(r)}\beta_r\Big]\Delta_F(\Theta)
+c_1\eta_{\text{align}}^{(r)}\varepsilon_\mu^2.
\end{aligned}
\end{equation*}
Since $\alpha_r,\beta_r,\alpha_a,\beta_a=O(r^{-1})$, products like $\alpha_a\mu_F\eta_{\text{recon}}^{(r)}$, $\beta_a\beta_r$, $\mu_G\eta_{\text{align}}^{(r)}\alpha_r$, and $\mu_G\eta_{\text{align}}^{(r)}\beta_r$ are $O(r^{-2})$ and can be absorbed.
Summing the two displays gives, with $S^{(r)}\triangleq \Delta_F^{(r)}+\Delta_G^{(r)}$,
\begin{equation}
\begin{aligned}
\label{eq:one-round-pre}
S^{(r+1)}
\le&\Big[1-\mu_F\eta_{\text{recon}}^{(r)} +\alpha_a+ \beta_r\Big]\Delta_F^{(r)} 
+\Big[1-\mu_G\eta_{\text{align}}^{(r)} + \alpha_r + \beta_a\Big]\Delta_G^{(r)} 
+c_1\eta_{\text{align}}^{(r)}\varepsilon_\mu^2+O(r^{-2}).
\end{aligned}
\end{equation}

We pick $\zeta=\sqrt{L_G/L_F}$ and $\zeta'=\sqrt{L_F/L_G}$, which minimize the cross coefficients. Then
\[
\alpha_r+\beta_r  =  2\sqrt{L_F L_G}\,\eta_{\text{recon}}^{(r)},\quad\text{ and }\quad
\alpha_a+\beta_a  =  2\sqrt{L_F L_G}\,\eta_{\text{align}}^{(r)}.
\]

Using $\alpha_r,\beta_r\le \alpha_r+\beta_r$ and $\alpha_a,\beta_a\le \alpha_a+\beta_a$ in
\eqref{eq:one-round-pre} and dropping the negative terms
$-\mu_F\eta_{\text{recon}}^{(r)}\Delta_F^{(r)}$ and
$-\mu_G\eta_{\text{align}}^{(r)}\Delta_G^{(r)}$ yields
\begin{equation*}
\begin{aligned}
S^{(r+1)} &\le \Bigl(1+\gamma_r\Bigr)S^{(r)}  + c_1\,\eta_{\text{align}}^{(r)}\,\varepsilon_\mu^2 + O(r^{-2}),
\end{aligned}
\end{equation*}
where 
$\gamma_r \triangleq 2\sqrt{L_F L_G}\bigl(\eta_{\text{recon}}^{(r)}+\eta_{\text{align}}^{(r)}\bigr)$.

We assume bounded objective gaps. \\
\noindent$\bullet$ \textbf{(A8)} There exist $S_{\max}<\infty$ and $r_0\in\mathbb N$ such that
\[
\sup_{r\ge r_0} S^{(r)}\le S_{\max}, \quad \text{ where } \quad S^{(r)}\triangleq \Delta_F^{(r)}+\Delta_G^{(r)}.
\] \\
Under (A8), from the one–step bound above we have, for all $r\ge r_0$,
\begin{equation*}
\begin{aligned}
S^{(r+1)}-S^{(r)}
%&\le \gamma_r\,S^{(r)} + c_1\,\eta_{\text{align}}^{(r)}\,\varepsilon_\mu^2 + O(r^{-2}) \\
&\le \gamma_r\,S_{\max} + c_1\,\eta_{\text{align}}^{(r)}\,\varepsilon_\mu^2 + O(r^{-2}).   
\end{aligned}
\end{equation*}
 Using the admissible decays $\eta_{\text{recon}}^{(r)}$ and 
$\eta_{\text{align}}^{(r)}$ gives
\begin{equation*}
\begin{aligned}
S^{(r+1)}-S^{(r)} 
\le& \frac{2\!\Bigl(\sqrt{\tfrac{L_G}{L_F}} 
 +\sqrt{\tfrac{L_F}{L_G}}\Bigr)\,S_{\max}}{r} 
 + \frac{\tfrac{c_1}{L_G}\,\varepsilon_\mu^2}{r} + O(r^{-2}), 
\end{aligned}
\end{equation*}
We can also provide a lower bound for $S^{(r + 1)}$ in a similar way. 

Therefore, $\lim_{r\to\infty}\bigl(S^{(r+1)}-S^{(r)}\bigr)=0$: the sequence $\{S^{(r)}\}$ is \emph{stable} in the sense that, by (A8), it remains uniformly bounded.

\subsection{Non-convex Full-batch Convergence}\label{non-convex-conv}

To show convergence to a stationary point in the error surface, we adapt the analysis of %\textcolor{blue}{[C: instead of ``use'' should we say ``adapt the analysis of''? Otherwise it sounds like we are using a completely different algorithm from Algorithm 1, and we'd have to immediately explain why AFB/PALM is a good approximation of the GRA algorithm]}
the Alternating Forward-Backward (AFB) algorithm~\cite{Frankel_2014}, also known as PALM~\cite{bolte2014proximal}, as a benchmark to evaluate GCA. AFB is a proximal gradient descent method and treats the optimization problem
\begin{equation}
    \minimize_{x_i\in X_i} h(x_1, \hdots, x_p) = \sum^p_{i=1} f_i(x_i) + g(x_1, \hdots, x_p)
\end{equation}
where the independent variables are split into blocks $x_i$ and alternates in locally optimizing sub-objectives over blocks of variables and updating them, i.e.
\begin{equation}
\begin{aligned}
    x^{(r+1)}_i \in \prox^{f_i}_{\alpha_{i, r}} \Big( x^{(r)}_i& - \frac{1}{\alpha_{i, r}} \nabla_i g(x^{{(r+1)}}_1, 
    \hdots, x^{{(r+1)}}_{i-1}, x^{(r)}_i, \hdots, x^{(r)}_p) \Big)
\end{aligned}
\end{equation}\label{eq:afb-update}
where the proximal map consists of
\begin{equation}
    \prox^{f}_{\alpha} (\x) \triangleq \argmin_{\mathbf{y}} \left\{ f(\y) + \frac{\alpha}{2}||\y-\x||^2 \right\}
\end{equation}
with inverse learning rate $\alpha$. 
% \textcolor{red}{[J: here x and y in prox should be vector?]}
The sequence generated by AFB is known to be of finite length if the objective function $h$ satisfies the Kurdyka-{\L}ojasiewicz (K{\L}) condition (defined below) everywhere in the domain of $\partial h$ (known as a K{\L} function), thus converging towards a critical point in $h$. The only other assumptions necessary for AFB are that $g$ is $L_g$-smooth and $f_i$ are proper lower semicontinuous functions, where $g, f_i$ \emph{do not have to be convex in any way}. 
% \harry{Summarize convergence results of AFB here?}

\noindent\textbf{Notation and Objectives.$\quad$}
For client $i$, let $\theta_{E_i}\in\mathbb{R}^{p_E}$ and $\theta_{D_i}\in\mathbb{R}^{p_D}$ denote the parameters of the encoder $E_i(\cdot) \triangleq E_i(\;\cdot\;; \theta_{E_i})$ and decoder $D_i(\cdot) \triangleq D(\;\cdot\;; \theta_{D_i})$, respectively.
For simplicity, we concatenate the model parameters at each client $i$ into a single $\theta \triangleq \{\theta_{E_i},\theta_{D_i}\}$ and the clustering parameters into a single $\mu \triangleq \mathcal{M}$.
We rename the reconstruction objective as in \eqref{eq:recon_loss} using $F(\theta) = \sum_i \mathcal{L}^{(i)}_{\text{recon}}(\btheta_{E_i},\btheta_{D_i} ;\mathcal{D}_i)$, and we define a co-objective $G(\theta, \mu)$ which encompasses two requirements:
% \textcolor{red}{[J: Earlier $F$ denotes reconstruction and $G$ denotes alignment. Renaming reconstruction to $G$ and then defining $F(\theta,\mu)=G(\theta)+H(\theta,\mu)$ will confuse readers.}\\
\newline
\noindent$\bullet$ When the encoder parameters are fixed, $G(\mu; \theta_{E}%, \mathcal{D}
    )$ should serve as a clustering objective measuring proximity between $\mu$, a set of $K$ points, and the ``true'' centroids of a GMM fitted onto the underlying data distribution projected onto the latent space through the encoder with parameters $\theta_{E}$.
    %\textcolor{violet}{Jong-Ik: when you define $G$, I think we need one equation line to clarify that $ G$ contains $G_{cl}$ and $L_{align}$ if possible.}
    %The soft weights $\mathbf{w}(\mu; \theta_E, \mathcal{D})$ \textcolor{blue}{[C: from the alignment phase? It's not clear how these weights influence the clustering objective]} are computed in relation to the latent representation of the data set using the encoders $E_i(\cdot)$, thus the inclusion of the dataset $\mathcal{D} \triangleq \{\mathcal{D}_i\}$ as a parameter.
\newline
\noindent$\bullet$ When the centroid locations are fixed, $G(\theta_{E}; \mu)$, the alignment objective for each client $i$, $\mathcal{L}^{(i)}_{\text{align}}(\theta_{E_i};\mu,\mathcal{D}_i)$ as per~\eqref{eq:align_loss}, such that the partial gradient of $G$ on the encoder parameters
    \begin{equation*}
        \nabla_{\theta_E} G (\theta, \mu) \triangleq
        \left\{ \frac{\partial G(\theta, \mu)}{\partial \theta_{E_i}} \right\} \triangleq 
        \left\{ \nabla\mathcal{L}^{(i)}_{\text{align}}(\theta_{E_i};\mu,\mathcal{D}_i) \right\}
    \end{equation*}
    is equal by definition to the set of gradients of the alignment loss on each client. Moreover, we assert that since the alignment step does not modify the decoder parameters, the partial gradient of the co-objective $G$ on the decoder parameters is zero and we define the partial gradient of $G$ on the model parameters in general as:
    \begin{align*}
        \nabla_{\theta_D} G (\theta, \mu) & \triangleq
        \left\{ \frac{\partial G(\theta, \mu)}{\partial \theta_{D_i}} \right\} = 0 \\
        \nabla_{\theta} G (\theta, \mu) & \triangleq \Bigl( \nabla_{\theta_E} G (\theta, \mu), \nabla_{\theta_D} G (\theta, \mu) \Bigr) \\
        &= \left(\left\{ \nabla\mathcal{L}^{(i)}_{\text{align}}(\theta_{E_i};\mu,\mathcal{D}_i) \right\}, 0 \right)
    \end{align*}

% \harry{Relabel below to Lemma?}
\begin{lemma}[Effective co-objective]
The requirements for the co-objective $G(\theta, \mu)$ can be satisfied with the alignment loss of GCA alone:
\begin{equation*}
    \mathcal{L}^{(i)}_{\text{align}}(\mathcal{M},\mathbf{w};\theta_{E_i},\mathcal{D}_i) = \frac{1}{|\mathcal{D}_i|}\sum_{d=1}^{|\mathcal{D}_i|} w_{a_{i,d}}\,\big\|E_i(\x_{i,d})-\mathbf{\mu}_{a_{i,d}}\big\|_2^2
\end{equation*}
where $a_{i,d}$ indexes the latent centroid assigned to data point $d$ at client $i$ is assigned.
\end{lemma}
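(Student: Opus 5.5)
The plan is to check the two defining requirements of the co-objective $G(\theta,\mu)$ one at a time. I take
\[
G(\theta,\mu)\triangleq\sum_i \mathcal{L}^{(i)}_{\text{align}}(\mathcal{M},\mathbf{w};\theta_{E_i},\mathcal{D}_i),
\]
summed over clients and viewed as a function of both the model parameters $\theta=\{\theta_{E_i},\theta_{D_i}\}$ and the centroids $\mu=\mathcal{M}$. For each requirement I will show that this one expression already has the property asked for, so no extra clustering term $G_{\text{cl}}$ has to be added.

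\emph{Step 1: the model-parameter requirement.} Fix $\mu$, and hence the weights $\mathbf{w}$, which depend only on the broadcast counts. Within a fixed-assignment region as in (A3), each $a_{i,d}$ is locally constant in $\theta_{E_i}$. The $i$-th summand then depends only on $\theta_{E_i}$, so
\[
\partial G/\partial\theta_{E_i}=\nabla\mathcal{L}^{(i)}_{\text{align}}(\theta_{E_i};\mu,\mathcal{D}_i),
\]
which is exactly the stated encoder gradient. The decoder parameters $\theta_{D_i}$ do not appear anywhere, so $\nabla_{\theta_D}G=0$ identically. This reproduces the displayed definition of $\nabla_\theta G$.

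\emph{Step 2: the centroid requirement.} Fix $\theta_E$ and consider $\mu\mapsto G(\theta,\mu)$. With the nearest-centroid assignment,
\[
G(\theta,\mu)=\sum_i\frac{1}{|\mathcal{D}_i|}\sum_d w_{a_{i,d}}\min_k\|E_i(\x_{i,d})-\mu_k\|^2 .
\]
This is a weighted $K$-means distortion of the projected latent data $\{E_i(\x)\}$. I will argue that it measures how close $\mu$ is to the clustering centroids of the latent distribution in two steps:
\begin{enumerate}
\item Minimizing over each $\mu_k$ with the assignments held fixed gives $\mu_k$ equal to the (weighted) mean of its assigned codes.
\item These means are the hard-assignment limit of the GMM component means $\mu_k=N_k^{-1}\sum_n\gamma_{nk}\z_n$ from Phase 2, since $\gamma_{nk}\to$ one-hot as the covariances shrink. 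Here the weight $w_{a}$ is constant within a cluster and so cancels in the mean.
\end{enumerate}
So $G(\theta,\cdot)$ is a valid clustering objective whose minimizers are the server's centroids, at least in the hard-assignment regime that (A5) makes unambiguous.

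\emph{Step 3 and main obstacle.} I also need to confirm that $G$ fits the AFB/PALM template:
\begin{itemize}
\item $F+G$ is split over the blocks $(\theta,\mu)$;
\item $G$ is smooth in each block, using (A3)/(A5) to avoid assignment switches;
\item the $f_i$ are proper lsc, taking them to be zero or indicators.
\end{itemize}
The hardest part is Step 2. The server actually fits a soft GMM on a sample, not the weighted hard $K$-means minimizer, and the weights $w_k$ themselves depend on $\mu$ through the counts $N_k$. My first attempt will treat $\mathbf{w}$ as frozen within a round, as in (A3). I will then invoke the $\varepsilon_\mu<\gamma/2$ separation of (A5) to say that soft and hard assignments coincide up to negligible error. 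That reduces the claim to "the GMM means are stationary points of $G(\theta,\cdot)$", which is the sense in which the requirement is met.
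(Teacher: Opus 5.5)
Your plan follows the paper's own argument: the encoder-side requirement holds by the very definition of $\nabla_\theta G$ (the alignment-loss gradient in $\theta_{E_i}$, zero in $\theta_{D_i}$), and the centroid-side requirement holds because EM for the GMM approximately minimizes $\mathcal{L}^{(i)}_{\text{align}}$ over $(\mathcal{M},\mathbf{w},a)$, with $\nabla_\mu\mathcal{L}^{(i)}_{\text{align}}$ piecewise Lipschitz; your fixed-assignment-mean and hard-assignment-limit reasoning is a more explicit version of that ``approximately solves'' step. The paper claims only approximate minimization, and (A5) cannot upgrade this to near-exact stationarity: $\varepsilon_\mu<\gamma/2$ makes nearest-centroid assignments unique but does not control the fitted (full or regularized) covariances, so the responsibilities need not be near one-hot, and the per-client factors $1/|\mathcal{D}_i|$, unlike $w_k$, do not cancel, so the fixed-assignment minimizer of $G(\theta,\cdot)$ equals the pooled GMM mean only when shard sizes are equal; this mismatch is what the error term $\delta^{(r)}$ in the subsequent proposition absorbs.
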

\begin{proof}
Recalling  \eqref{eq:nearest_latent}, \eqref{eq:weights}, and \eqref{eq:align_loss}, note that since computing GMM using the EM approximately solves %\textcolor{blue}{[C: is $a_{i,d}$ here the assignment of data point $d$ at client $i$ to a prototype? Also, we should rephrase this to ``approximately solves'' since EM can't guarantee a solution]}
\begin{equation*}
\begin{aligned}
    \min_{\mathcal{M},\mathbf{w},a_b} &\mathcal{L}^{(i)}_{\text{align}}(\mathcal{M},\mathbf{w};\theta_{E_i},\mathcal{D}_i) 
    = \min_{\mathcal{M},\mathbf{w},a_b}  \frac{1}{|\mathcal{D}_i|}\sum_{d=1}^{|\mathcal{D}_i|} w_{a_{i,d}}\,\big\|E_i(\x_{i,d})-\mathbf{\mu}_{a_{i,d}}\big\|_2^2
\end{aligned}
\end{equation*}
and that $\nabla_\mu \mathcal{L}^{(i)}_{\text{align}}(\theta_{E},\mu;\mathcal{D}_i)$ is piecewise $L_H$-continuous, the properties of objective $G_{\text{cl}}(\mu; \theta_{E}%, \mathcal{D}
)$ can be fulfilled using $\mathcal{L}_{\text{align}}$. Since the alignment loss satisfies both the clustering objective and the alignment objective (by definition above), the alignment loss of GCA on its own can be used as the co-objective. \qed
\end{proof}

Therefore, we define the co-objective $G(\theta, \mu)$ as
\begin{equation}
\begin{aligned}
    G(\theta, \mu)& \equiv \sum_i \mathcal{L}^{(i)}_{\text{align}}(\theta_{E_i},\mu;\mathcal{D}_i)\\
    &= \sum_i \frac{1}{|\mathcal{D}_i|}\sum_{d=1}^{|\mathcal{D}_i|} w_{a_{i,d}}\,\big\|E_i(\x_{i,d})-\mathbf{\mu}_{a_{i,d}}\big\|_2^2,
    \qquad a_{i,d} \in \argmin_k\ \|E_i(\x_{i,d})-\mathbf{\mu}_k\|_2, 
\label{eq:co-objective}
\end{aligned}
\end{equation}
\begin{align*}
    N_k = \sum_i \sum_{d=1}^{|\mathcal{D}_i|}\indicator(a_{i,d} = k), 
    \quad
    \tilde w_k=\frac{\frac{1}{K}\sum_{j=1}^{K}N_j}{N_k},  \quad
    w_k=\frac{\tilde w_k}{\frac{1}{K}\sum_{j=1}^{K}\tilde w_j}.
\end{align*}
% \textcolor{red}{[J: I am not sure whether we can redefine H as alignment loss. If this is true, then why don't we use simple gradient descent convergence analysis instead of below? Because now we only have two smoothed objectives.]}
% \textcolor{red}{[J: $a_{i,d}$ and $w_k(N_k)$ introduce non-smoothness/discontinuities when assignments change. I think claims like $H$ is $L_H$-smooth only hold within a fixed-assignment region or under soft assignments. Either (i) add a fixed-assignment-region assumption (as in Sec.~\ref{convex-conv}), or (ii) use soft assignments/soft counts to make $H$ globally smooth. Maybe piecewise smoothness within a fixed-assignment region?]}
We define the global objective function as $H(\theta, \mu) = F(\theta) + G(\theta, \mu)$.

\paragraph{Assumptions.}
As with AFB, we make few assumptions on the objective function.

$\bullet$ \textbf{(A1+) Properties of objective function.} $G$ is $L_G$-smooth and $F$ is $L_F$-smooth and a proper lower semicontinuous function, and note that $F, G$ do not have to be convex in any way.

$\bullet$ \textbf{(A2+) Kurdyka-{\L}ojasiewicz (K{\L}) condition for $G$.} The K{\L} property for a function $g$ is satisfied at a given $x \in \dom(g)$ if there exist a neighborhood $X$ around $x$, an $\eta \in (0, +\infty]$, and a concave and continuous function $\varphi$ which satisfy $\varphi(0)=0$, continuous at $0$, $C^1$ on $(0, \eta)$, and $\varphi'>0\; \forall s \in (0, \eta)$, such that $\forall u \in U \cap [g(x) < g(u) < g(x) + \eta]$, the following condition holds:
\begin{equation}
    \varphi'(g(u) - g(x)) \dist(0, \partial g(u)) \geq 1
\end{equation}
We assume that $G$ is a K{\L} function, that is it satisfies the K{\L} condition over all points in the domain of $\partial G$.

\paragraph{Phase-wise descent.}
We then rearrange the phases in GCA (see Algorithm~\ref{alg:gpa}) and define an AFB-like method with variable blocks $(\theta, \mu)$ in where we repeat the following steps until convergence:
\begin{enumerate}
    \item $\mu^{(r+1)}$ takes the centroids computed from GMM based on latent codes computed using $E(\theta^{(r)})$ \\ (Phase 2 in GCA)
    \item $\theta^{(r+1)}$ takes two gradient steps, one descending on $G(\theta, \mu)$, then descending on $F(\theta)$;
    \begin{itemize}
        \item $\theta^{(r)+} = \theta^{(r)} - \eta^{(r)}_{\text{align}} \nabla_{\theta} G(\theta^{(r)}; \mu)$ as per Phase 3 in GCA, then
        \item $\theta^{(r+1)} = \theta^{(r)+} - \eta^{(r)}_{\text{recon}} \nabla F(\theta^{(r)+})$ as per Phase 1 in GCA. \\We define $g^{(r)}(\theta^{(r)}) = \eta^{(r)}_{\text{align}} \nabla_{\theta} G(\theta^{(r)}; \mu) + \eta^{(r)}_{\text{recon}} \nabla F(\theta^{(r)+})$ as the total gradient step taken. Note that Phase 1 from the subsequent step in as GCA is implemented is shifted to the current step under this arrangement.
    \end{itemize}
\end{enumerate}

We note differences between GCA and AFB. First, in the update of $\mu$, since we use the EM algorithm to update GMM centroids, we approximate the minimizer of $G(\mu;\theta)$ along $\mu$ with fixed $\theta$.
% \textcolor{violet}{Jong-Ik: Does ``effectively compute the minimizer of $G(\mu;\theta)$'' mean you will assume they will always find local optima?}

Though it may result in a lower $H(\theta^{(r)}, \mu^{(r+1)}_{\text{GCA}}) \leq H(\theta^{(r)}, \mu^{(r+1)}_{\text{AFB}})$ (where $\mu^{(r+1)}_{\text{GCA}}$ is the centroid location update computed througn EM in the GCA algorithm formulation, and $\mu^{(r+1)}_{\text{AFB}}$ is a theoretical update of $\mu$ computed as per the AFB algorithm)
, it does not guarantee a better $H(\theta^{(r+1)}_{\text{GCA}}, \mu^{(r+1)}_{\text{GCA}})$ than $H(\theta^{(r+1)}_{\text{AFB}}, \mu^{(r+1)}_{\text{AFB}})$. As such, we define the difference between the GCA update for $\mu$ and the AFB update (just gradient descent since there is no proximal gradient) as
\begin{equation}
    \delta^{(r)} = \mu^{(r+1)}_{\text{GCA}} - \mu^{(r)} + {\alpha^{(r)}_\mu}^{-1} \nabla_\mu G(\mu;\theta)
\end{equation}\label{eq:inner-afb-error}
where ${\alpha^{(r)}_\mu}^{-1}$ is an arbitrary descent rate.

For the model parameter update, we recall the AFB update \eqref{eq:afb-update}
\begin{equation*}
    \theta^{(r+1)}_{\text{AFB}} \in \prox^{F}_{\alpha^{(r)}_\theta} \left( \theta^{(r)} - {\alpha^{(r)}_\theta}^{-1} \nabla_{\theta} G(\theta^{(r)}; \mu) \right)
    % = \argmin_{x} \left\{ \langle \theta - \theta^{(r)}, \nabla_{\theta} H(\theta^{(r)}; \mu) \rangle + \frac{\eta^{(r)}_{\text{align}}}{2}||\theta - \theta^{(r)}||^2 + F(\theta) \right\}
\end{equation*}
where ${\alpha^{(r)}_\theta}^{-1}$ is another arbitrary descent rate. This update rule is different from the GCA update due to the replacement of the proximal map for $F(\theta)$ with a subsequent gradient descent step. We therefore define the difference between the GCA update for $\theta$ and the AFB update as
\begin{equation}
    \varepsilon^{(r+1)} = \theta^{(r+1)}_{\text{GCA}} - \theta^{(r+1)}_{\text{AFB}}
    % = \theta^{(r)} - \eta^{(r)}_{\text{align}} \nabla_{\theta} H(\theta^{(r)}; \mu) - \eta^{(r)}_{\text{recon}} \nabla F\Bigl(\theta^{(r)} - \eta^{(r)}_{\text{align}} \nabla_{\theta} H(\theta^{(r)}; \mu)\Bigr) - \prox^{F}_{\eta^{(r)}_{\text{align}}} \left( \theta^{(r)} - \eta^{(r)}_{\text{align}} \nabla_{\theta} H(\theta^{(r)}; \mu) \right)
\end{equation}\label{eq:outer-afb-error}

\begin{proposition}[GCA as an imperfect form of AFB]
Given the sequences ${\delta^{(r)}}$ and ${\varepsilon^{(r)}}$, we can model GCA as an AFB implementation with errors as per Section 4.2 of~\cite{Frankel_2014} and that GCA converges under typical training regimes with decaying learning rate.
\end{proposition}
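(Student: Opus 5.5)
Frankel, Garrigos and Peypouquet (Section 4.2) show that a sequence generated by AFB with additive errors still has finite length and converges to a critical point of a K{\L} objective. The plan is to cast each GCA round as one inexact AFB iteration on $H(\theta,\mu)=F(\theta)+G(\theta,\mu)$ and apply that result. The role of $g$ in the AFB template is played by the smooth coupling $G$ (assumed $L_G$-smooth by (A1+)). The proximal block term is $F$, viewed as a proper lower semicontinuous function of $\theta$; the $\mu$-block has a zero nonsmooth part, so its prox is the identity. With the rearranged schedule (Phase 2, then Phase 3, then Phase 1), round $r$ produces $\mu^{(r+1)}_{\text{GCA}}=\mu^{(r)}-{\alpha_\mu^{(r)}}^{-1}\nabla_\mu G+\delta^{(r)}$ and $\theta^{(r+1)}_{\text{GCA}}=\theta^{(r+1)}_{\text{AFB}}+\varepsilon^{(r+1)}$. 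This is exactly the perturbed AFB iteration of that section, with errors $(\delta^{(r)},\varepsilon^{(r)})$.

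The sufficient conditions in that framework are of three kinds. First, the step sizes must satisfy $\alpha^{(r)}_\theta,\alpha^{(r)}_\mu$ bounded away from $0$ and above the Lipschitz constant $L_G$ of the partial gradients. I would identify ${\alpha^{(r)}_\theta}^{-1}=\eta^{(r)}_{\text{align}}$ and require $\eta^{(r)}_{\text{align}}\le 1/L_G$. Second, $H$ must be a K{\L} function. Here I would use (A2+); since $F$ is smooth, I expect the K{\L} property to pass to $H$, as in the semialgebraic/o-minimal setting of standard network losses. Third, the errors must be summable, $\sum_r\|\delta^{(r)}\|+\|\varepsilon^{(r)}\|<\infty$; for some variants of the theorem a relative-error bound suffices instead.

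The key computation is a bound on $\varepsilon^{(r+1)}$. Write $y=\theta^{(r)}-\eta_{\text{align}}^{(r)}\nabla_\theta G$ and $\eta=\eta^{(r)}_{\text{recon}}$, and take the prox parameter $\alpha=1/\eta$. The prox point $p=\operatorname{prox}^F_{\alpha}(y)$ satisfies $p=y-\eta\nabla F(p)$, while GCA computes $y-\eta\nabla F(y)$. Subtracting, $\|\varepsilon^{(r+1)}\|=\eta\|\nabla F(p)-\nabla F(y)\|\le \eta L_F\|p-y\|\le \eta^2L_F\|\nabla F(p)\|$. This is $O(\eta^2)$ given bounded gradients, which holds on the bounded basin assumed throughout. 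A decaying schedule $\eta^{(r)}\sim r^{-1}$ then makes $\sum_r\|\varepsilon^{(r)}\|$ finite. In other words, a forward step differs from the implicit backward step only at second order, and the mismatch is made summable by the "typical training regime with decaying learning rate" in the statement.

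The main obstacle is $\delta^{(r)}$. EM produces only an approximate stationary point of the clustering objective, and the hard assignments $a_{i,d}$ and counts $N_k$ make $G$ only piecewise smooth. I would first restrict to a fixed-assignment region, analogous to (A3)/(A5), on which $G$ is smooth in $\mu$ and its minimizer is the weighted mean of the assigned codes. Next I would assume EM is run to a tolerance $\tau_r$ with $\sum_r\tau_r<\infty$, which bounds $\|\delta^{(r)}\|$ by $\tau_r$ plus the gradient term ${\alpha_\mu^{(r)}}^{-1}\|\nabla_\mu G\|$. That gradient term itself vanishes at an exact block minimizer.

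A further difficulty is that the decaying step sizes conflict with the bounded-below step requirement of the Frankel et al.\ theorem. I would handle this by applying the theorem with a fixed step and absorbing the schedule's deviation into the error sequences. Failing that, I would only claim subsequential stationarity via a sufficient-decrease-plus-summable-error argument: show $H(\theta^{(r+1)},\mu^{(r+1)})\le H(\theta^{(r)},\mu^{(r)})-c\|\text{step}\|^2+e_r$ with $\sum_r e_r<\infty$, then invoke the K{\L} inequality to upgrade this to finite length.
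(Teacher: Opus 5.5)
Your casting of a round as one perturbed AFB step on $H=F+G$ (forward step on $G$, prox of $F$ in the $\theta$-block, identity prox in the $\mu$-block, errors $\varepsilon^{(r)},\delta^{(r)}$) matches the paper's. The gap is in which error hypothesis you verify. The inexact AFB of Section 4.2 of Frankel et al., which the statement names and the paper uses, is a \emph{relative}-error scheme. Condition (HE) asks for $\sigma\ge 0$ and $\rho\in(0,1]$ with $(\sigma+1)/\rho<\underline{\alpha}/L_G$ such that $\|\varepsilon^{(r)}\|\le\frac{\sigma}{2}\|\theta^{(r+1)}_{\text{AFB}}-\theta^{(r)}_{\text{AFB}}\|$, the analogous bound holds for $\delta^{(r)}$, and two inner-product bounds with constant $\frac{1-\rho}{2}$ hold. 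Summable absolute errors are not a hypothesis of that result. Your fallback (descent up to $e_r$ with $\sum_r e_r<\infty$, then K{\L} gives finite length) is false as stated. Take $H\equiv 0$, which is trivially K{\L}, and points on a circle at angle $\sum_{j\le r}1/j$: they satisfy descent with $e_r\sim 1/r^2$ and any subgradient bound, yet have infinite length and do not converge. Absorbing a decaying schedule into errors against a fixed-step AFB also fails, because those errors are comparable to the steps, so their summability is exactly the finite length you want to prove. Your $O(\eta^2)$ estimate for $\varepsilon^{(r+1)}$ is a correct absolute bound only if the forward step on $G$ and the prox of $F$ share one parameter; AFB has a single $\alpha^{(r)}_\theta$, and with $\eta_{\text{align}}\ne\eta_{\text{recon}}$ a first-order term appears. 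Even then it does not give the relative bound (HE) needs: near a point with $\nabla F=-\nabla G\ne 0$ the AFB step tends to zero, while your bound (and generically $\varepsilon$ itself) stays of order $\eta^2$. Finally, the clash you fear between decaying steps and the framework does not arise in the form the paper invokes. There (HE) only bounds $\alpha^{(r)}_\theta,\alpha^{(r)}_\mu$ from below by $\underline{\alpha}>L_G$, and the paper deliberately takes $\alpha^{(r)}_\theta$ increasing with $1/\eta^{(r)}_{\text{recon}},1/\eta^{(r)}_{\text{align}}>2\alpha^{(r)}_\theta$.

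Your treatment of $\delta^{(r)}$ also fails. By definition $\delta^{(r)}=\mu^{(r+1)}_{\text{GCA}}-\mu^{(r)}+{\alpha^{(r)}_\mu}^{-1}\nabla_\mu G(\mu^{(r)};\theta^{(r)})$, with the gradient taken at the \emph{old} centroids and the \emph{new} encoder. Even if EM returned the exact block minimizer $\mu^\star(\theta^{(r)})$, $\delta^{(r)}$ is the gap between exact block minimization and one gradient step from $\mu^{(r)}$. Your bound drops the term $\mu^\star(\theta^{(r)})-\mu^{(r)}$. The gradient term does not vanish either, because $\mu^{(r)}$ minimizes $G(\cdot;\theta^{(r-1)})$, not $G(\cdot;\theta^{(r)})$. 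Both terms are of the size of the last encoder move, so a tolerance $\tau_r$ does not control them, and requiring them to be summable is again circular. The missing idea is the one the paper uses. On a fixed-assignment region, $G(\theta,\cdot)$ is a separable weighted sum of squared norms in $\mu$, so its gradient and curvature are explicit. A step $1/\alpha^{(r)}_\mu$ matched to that curvature (per centroid, i.e.\ a diagonal metric) sends $\mu^{(r)}$ in one gradient step exactly to the block minimizer, the weighted mean of the codes assigned to each centroid. Then $\delta^{(r)}$ reduces to the clustering/EM approximation error, which is how the paper makes it negligible in (HE). A smaller point: (A2+) makes $G$ a K{\L} function, but K{\L} for $H=F+G$ does not follow from smoothness of $F$. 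It has to be assumed, e.g.\ via definability of both, and the paper's proof does not supply it either.
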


\begin{proof}
For the series $||\theta^{(r+1)}_{\text{GCA}}-\theta^{(r)}_{\text{GCA}}||$ to converge, %\textcolor{blue}{[C: are these the GRA $\theta$s or the AFB ones?]}
 the above work states that the following condition (\textbf{HE}) must be met: there exists a $\sigma \in [0, +\infty), \rho \in (0, 1]$ with $\frac{\sigma+1}{\rho} < \ubar{\alpha}/L_G$ where $\ubar{\alpha} > 0, \min\{\alpha^{(r)}_\theta, \alpha^{(r)}_\mu\} \geq \ubar{\alpha} > L_G$ such that:
\begin{enumerate}
    \item $\left\lVert\varepsilon^{(r)}\right\rVert \leq \tfrac{\sigma}{2}\left\lVert\theta^{(r+1)}_{\text{AFB}} - \theta^{(r)}_{\text{AFB}}\right\rVert$ where $\theta^{(r)}_{\text{AFB}}$ are defined in relation to the value from the previous step as computed by GCA.
    \item $\left\lVert\delta^{(r)}\right\rVert \leq \tfrac{\sigma}{2}\left\lVert\mu^{(r+1)}_{\text{AFB}} - \mu^{(r)}_{\text{AFB}}\right\rVert$ where $\mu^{(r)}_{\text{AFB}} = \mu^{(r)} - {\alpha^{(r)}_\mu}^{-1} \nabla_\mu G(\mu;\theta)$ as per \eqref{eq:inner-afb-error} and defined in relation to the value from the previous step as computed by GCA.
    \item $\left\langle \varepsilon^{(r)}, \theta^{(r+1)}_{\text{AFB}} - \theta^{(r)}_{\text{AFB}} \right\rangle \leq \tfrac{1-\rho}{2}\left\lVert\theta^{(r+1)}_{\text{AFB}} - \theta^{(r)}_{\text{AFB}}\right\rVert^2$ and $\left\langle \delta^{(r)}, \mu^{(r+1)}_{\text{AFB}} - \mu^{(r)}_{\text{AFB}} \right\rangle \leq \tfrac{1-\rho}{2}\left\lVert\mu^{(r+1)}_{\text{AFB}} - \mu^{(r)}_{\text{AFB}}\right\rVert^2$.
\end{enumerate}

We remark that the above conditions can be met if we select an increasing sequence $\alpha^{(r)}_\theta > L_G$ and superior sequences $1/\eta^{(r)}_{\text{recon}} > 2\alpha^{(r)}_\theta\; \forall r$ and $1/\eta^{(r)}_{\text{align}} > 2\alpha^{(r)}_\theta\; \forall r$, the conditions on $\varepsilon^{(r)}$ can be met.

For the centroid step, recall the co-objective $G(\theta, \mu)$ based on the alignment loss in \eqref{eq:co-objective} which is entirely composed of squared $L_2$ norms when $\theta$ is fixed. We thus know that its gradient and smoothness in $\mu$ are analytically obtainable and that a single-step gradient descent size $\alpha^{(r)}_\mu$ can be computed at each round; as such, we know that there exists a sequence $\alpha^{(r)}_\mu$ for which $\left\lVert\delta^{(r)}\right\rVert$ is negligible.

\end{proof}

We thus know that GCA can be analyzed as an AFB-type algorithm with tractable errors, and following results in~\cite{Frankel_2014}, we know that it can converge towards a critical point of $H(\theta, \mu)$.
% \textcolor{red}{[J: I think convergence to a critial point does not imply this moves toward the recon optimum I think?]}
Note that this method shows a convergence towards the co-objective $H(\theta, \mu) = F(\theta) + G(\theta, \mu)$, not solely the reconstruction objective $F(\theta)$.%As GRA approaches a critical point of $H$, if it is applied to an application where we can assume local convexity around a local optimum of the reconstruction loss, then the constraints set by \textbf{HE} can be relaxed such that gradient descent for the reconstruction objective dominates and the parameters move towards an optimum in reconstruction loss alone.%, and the analysis in Appendix~\ref{convex-conv} holds.

\clearpage
\newpage
\section{Additional Theory for Server-Side Data Extraction Attacks}
\label{app:attack_theory}
This appendix provides supporting theoretical results for the two server-side extraction routes introduced in the \emph{Adversarial Attack Scenarios} section:
(i) DeepDream-style inversion against a \emph{white-box} victim autoencoder (standard FL), and
(ii) DeepDream-style inversion against a surrogate trained \emph{from scratch} on uploaded latent codes only (GCA).

\subsection{Why reconstruction objectives admit training-like fixed points (standard FL)}
\label{app:ae_vulnerability}

Let $E:\mathbb{R}^{d_x}\to\mathbb{R}^{d_z}$ and $D:\mathbb{R}^{d_z}\to\mathbb{R}^{d_x}$ be a trained autoencoder, and define the reconstruction map
\begin{equation}
T(\x)\ \triangleq\ D(E(\x)).
\end{equation}
In standard FL, the server-side DeepDream-style inversion in \eqref{eq:attack_deepdream_update} minimizes the reconstruction energy
\begin{equation}
r(\x)\ \triangleq\ \|\x-T(\x)\|_2^2
\;\;\text{over}\;\; \x\in\mathcal{X}.
\label{eq:app_r_def}
\end{equation}
We analyze the geometry of $r(\x)$ to explain why its minimizers and near-minimizers lie near the autoencoder's reconstruction set, which may reflect the training distribution when the autoencoder reconstructs its training data accurately.

Define the reconstruction residual
\begin{equation}
e(\x)\ \triangleq\ \x-T(\x)\in\mathbb{R}^{d_x},
\;\;\text{so that}\;\;
r(\x)=\|e(\x)\|_2^2.
\label{eq:app_residual_def}
\end{equation}
Assume $T$ is differentiable on $\mathcal{X}$ and denote its Jacobian by $J_T(\x)\in\mathbb{R}^{d_x\times d_x}$.
Then $e$ is differentiable with $J_e(\x)=I-J_T(\x)$, and by the chain rule,
\begin{equation}
\begin{aligned}
\nabla r(\x)
= 2\,J_e(\x)^\top e(\x) &= 2\,(I - J_T(\x))^\top \bigl(\x-T(\x)\bigr)
\\
& = 2\,(I - J_T(\x))^\top e(\x).
\label{eq:app_grad_r_exact}
\end{aligned}
\end{equation}

\paragraph{Fixed points and global minimizers.}
Define the set of fixed points (exact reconstructions)
\begin{equation}
\mathcal{F}\ \triangleq\ \{\x\in\mathcal{X}\,:\,T(\x)=\x\}.
\label{eq:app_fix_set_def}
\end{equation}
By definition, $r(\x)=\|\x-T(\x)\|_2^2\ge 0$ and
\begin{equation}
r(\x)=0 \quad\Longleftrightarrow\quad T(\x)=\x \quad\Longleftrightarrow\quad \x\in\mathcal F.
\label{eq:app_zero_iff_fix}
\end{equation}
Consequently, if $\mathcal F\neq\emptyset$ (i.e., $r$ attains value $0$ on $\mathcal X$), then
\begin{equation}
\min_{\x\in\mathcal X} r(\x)=0,
\qquad
\arg\min_{\x\in\mathcal X} r(\x)=\mathcal F.
\label{eq:app_fix_global_min}
\end{equation}
If $\mathcal F=\emptyset$, then $r(\x)>0$ for every $\x\in\mathcal X$. If the minimum is attained, its value is strictly positive, and its minimizers are the points with the smallest reconstruction residual.

\paragraph{Why fixed points can be training-like.}
If the autoencoder reconstructs its training records well, then for many training samples $\x_n$ we have $r(\x_n)\approx 0$.
In such regimes, the reconstruction map $T$ is close to the identity on the data manifold, and training samples (or nearby points on the learned manifold) behave as approximate fixed points of $T$.
Therefore, directly minimizing $r(\x)$ via gradient-based input optimization admits solutions that lie on (or near) the learned reconstruction set, which in turn is shaped by the training distribution.

%\textcolor{red}{[C: $\mathcal{F}$ hasn't been defined]} 
% \paragraph{Stationary points may exist beyond fixed points.}
% Equation~\eqref{eq:app_grad_r_exact} implies the stationarity condition
% \begin{equation}
% \nabla r(\x)=0
% \quad\Longleftrightarrow\quad
% (I-J_T(\x))^\top e(\x)=0.
% \label{eq:app_stationary_condition}
% \end{equation}
% Besides $e(\x)=0$ (fixed points), this can also occur when $e(\x)\neq 0$ lies in the left nullspace of $(I-J_T(\x))$.
% Thus, $r$ may have stationary points that are not fixed points, as is typical for nonconvex objectives. \textcolor{red}{[C: What is the point of this subsection? It seems to work against what you are arguing, namely that reconstruction objectives allow for reconstruction of training data. Without characterizing $J_T$, it's hard to know if this is a mathematical artifact or if it is practically significant. It is also very obvious from the definition of $r(\mathbf{x})$.]}

\subsection{Why latent-only flipping can still yield training-like reconstructions (GCA), and why it is harder than white-box inversion}
\label{app:flip_can_work_but_harder}

This subsection formalizes why the latent-only ``decoder--encoder flipping'' route in the \emph{Adversarial Attack Scenarios} section is fundamentally non-identifying: from uploaded latents alone, the server cannot uniquely determine the underlying private training records in input space. Consequently, unlike standard-FL white-box inversion, latent-only flipping cannot guarantee faithful reconstruction of client data without additional assumptions.

Let the (unknown) client autoencoder be $(E,D)$ and let the server observe only a finite set of uploaded latent codes 
\begin{equation}
Z\ \triangleq\ \{\z_n\}_{n=1}^{M}\subset\mathbb{R}^{d_z},
\qquad
\z_n = E(\x_n),
\label{eq:app_Z_def}
\end{equation}
for unknown private records $\{\x_n\}_{n=1}^{M}\subset\mathcal X$.
A latent-only attacker is any (possibly randomized) mapping $\mathcal A$ that takes $Z$ as input and outputs reconstructions
$\hat{\mathbf{x}}_n=\mathcal A_n(Z)\in\mathcal X$.
% \textcolor{red}{[C: in A.1, $x$ is bolded. Why isn't it bolded here?]}

\paragraph{What the flipping attacker optimizes.}
%\textcolor{red}{[C: This result implies that you are minimizing the autoencoder training loss in order to reconstruct the original training data. That process needs to be explained.]}
The flipping attack chooses a parametric surrogate pair $(\tilde D,\tilde E)$ and trains it \emph{only on $Z$} by minimizing a latent reconstruction objective of the form
\begin{equation}
\min_{\theta_{\tilde D},\theta_{\tilde E}}
\ \ \frac{1}{M}\sum_{n=1}^{M}
\bigl\|\z_n-\tilde E(\tilde D(\z_n))\bigr\|_2^2.
\label{eq:app_flip_train_obj}
\end{equation}
After training, it defines $\widetilde T=\widetilde D\circ\widetilde E$ and applies the same
input-space reconstruction objective used by standard FL. Importantly, \eqref{eq:app_flip_train_obj} constrains only the composition $\widetilde E\circ\widetilde D$ on the finite observed set $Z$ and never observes the corresponding $\mathbf x_n$.

\begin{proposition}[Latent-only indistinguishability up to isometries]
\label{prop:latent_indistinguishability}
Fix any bijection $\phi:\mathcal X\to\mathcal X$ that preserves the $\ell_2$ norm, i.e.,
$\|\phi(\mathbf{u})-\phi(\mathbf{v})\|_2=\|\mathbf{u}-\mathbf{v}\|_2$ for all $\mathbf{u},\mathbf{v}\in\mathcal X$.
Define transformed private records $\x_n'=\phi(\x_n)$ and a transformed encoder
\begin{equation}
E'(\x)\ \triangleq\ E(\phi^{-1}(\x)).
\end{equation}
Then the induced latent set is \emph{identical}:
\begin{equation}
\begin{aligned}
E'(\x_n') &= E(\phi^{-1}(\phi(\x_n))) \\
&= E(\x_n) =\z_n
\qquad \forall n\in[M].
\end{aligned}
\end{equation}
Moreover, if $D'(\z)\triangleq \phi(D(\z))$, then the autoencoder reconstruction error is preserved:
\begin{equation}
\begin{aligned}
\|\x_n'-D'(E'(\x_n'))\|_2
&=
\|\phi(\x_n)-\phi(D(E(\x_n)))\|_2 \\
&=
\|\x_n-D(E(\x_n))\|_2.
\end{aligned}
\end{equation}
\end{proposition}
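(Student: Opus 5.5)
The plan is a direct verification. Both claims follow from the definitions of $E'$ and $D'$ together with two facts about $\phi$: it is a bijection, so $\phi^{-1}$ exists and $\phi^{-1}\circ\phi=\mathrm{id}_{\mathcal X}$, and it preserves $\ell_2$ distances. No earlier result of the paper is needed.

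\textbf{Latent identity.} First I check that $E'$ is well defined on $\mathcal X$. Since $\phi$ maps $\mathcal X$ onto $\mathcal X$, $\phi^{-1}(\x)\in\mathcal X$ for every $\x\in\mathcal X$, and $E$ is defined there. Next, for each $n\in[M]$, I substitute $\x_n'=\phi(\x_n)$ into the definition of $E'$. Cancelling $\phi^{-1}\circ\phi$ gives $E'(\x_n')=E(\x_n)=\z_n$. Hence the multiset $Z$ observed by the server is literally the same under $(E,\{\x_n\})$ and $(E',\{\x_n'\})$.

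\textbf{Reconstruction error.} I compose $D'$ with $E'$ at $\x_n'$. By the first step, $E'(\x_n')=\z_n=E(\x_n)$, so $D'(E'(\x_n'))=\phi(D(E(\x_n)))$. The residual is therefore $\phi(\x_n)-\phi(D(E(\x_n)))$. Applying the isometry property with $\mathbf u=\x_n$ and $\mathbf v=D(E(\x_n))$ gives the claimed equality of norms.

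\textbf{Main obstacle.} The one point needing care is that $\mathbf v=D(E(\x_n))$ must lie in $\mathcal X$, since $\phi$ is only assumed to be defined and isometric on $\mathcal X$. I would handle this by noting that the paper treats $\mathcal X$ as the ambient input domain, so the decoder maps into $\mathcal X$; in the unconstrained tabular case $\mathcal X=\mathbb R^{d_x}$, and this holds trivially. If needed, I would state the assumption $D(\mathbb R^{d_z})\subseteq\mathcal X$ explicitly, or restrict to $\phi$ extending to an isometry of $\mathbb R^{d_x}$, for example an orthogonal map plus a translation.

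\textbf{Consequence for the attack.} The rest is bookkeeping: a latent-only attacker $\mathcal A$ receives the same input $Z$ in both worlds, so its outputs have the same distribution. Meanwhile the true records differ by $\phi$, which can be any rotation, reflection or translation. So no attacker can identify the records beyond this isometry class.
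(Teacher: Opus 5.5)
Your proposal is correct and matches the paper's own proof, which is exactly this direct substitution: cancel $\phi^{-1}\circ\phi$ to get $E'(\mathbf{x}_n')=\mathbf{z}_n$, then apply the distance-preserving property of $\phi$ to the pair $\mathbf{x}_n$, $D(E(\mathbf{x}_n))$. Your caveat that $D(E(\mathbf{x}_n))$ must lie in $\mathcal X$ is a point the paper leaves implicit, but the statement already requires it, since $D'\triangleq\phi\circ D$ is only defined when $D$ takes values in $\mathcal X$. (Also, on a bounded box domain such as $[-1,1]^{d_x}$, the admissible bijective isometries reduce to signed coordinate permutations, not arbitrary rotations and translations.)
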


\begin{proof}
The equalities follow by direct substitution and the norm-preserving property of $\phi$.
\end{proof}

\paragraph{Consequence (worst-case impossibility from $Z$ alone).}
Proposition~\ref{prop:latent_indistinguishability} shows that the same observed uploaded latent set $Z$ can arise from multiple distinct private datasets that are equally consistent with autoencoder-style reconstruction objectives. Therefore, without additional assumptions linking the coordinates of $\mathcal X$ to the semantic structure, no attacker that only observes $Z$ can guarantee recovery of the original private records $\{\x_n\}$.
%\textcolor{red}{[C: $Z$ is not defined. Is $Z$ an arbitrary latent code?]}

\paragraph{Why latent-only flipping is weaker than white-box inversion.}
In standard FL, the server has a white-box copy of the victim reconstruction map $T(\x)=D(E(\x))$ and can directly optimize inputs to reduce the \emph{input-space} objective $r(\x)=\|\x-T(\x)\|_2^2$, which tightly couples the attack to the victim model and the true input domain.
In GCA, the server does not observe $D$ (or gradients through it) and never sees any $\x_n$; the latent-only objective \eqref{eq:app_flip_train_obj} provides constraints only in latent space and only on the finite set $Z$.
As a result, the attacker must infer an input-space mapping $\tilde D$ from latents alone, and the problem is fundamentally underdetermined: many distinct input-space reconstructions are compatible with the same observed latents and can achieve similarly low latent reconstruction loss.
This underdetermination prevents a latent-only surrogate from guaranteeing faithful extraction; the comparative strength of the resulting surrogate and white-box attacks is therefore determined empirically.

\clearpage
\newpage
\section{Additional Experimental Results and Ablations}
\label{app:additional-experiments}

All method implementations, shared settings, data partitions, model and $K$ selection, attack procedures, and communication accounting follow Appendix~\ref{app:implementation-details}.

\subsection{GCA-GMM Component-Count Selection}
\label{app:iid-k-selection}

For each seed and $K$, GCA accuracy is the maximum over reconstruction-epoch test evaluations. $K^\star$ maximizes the mean of the three seed-level values, with exact ties resolved in favor of smaller $K$.

\begin{papertableblock}
\papertablecaption{Selected GCA-GMM component count for each dataset.}
\label{tab:selected-gca-k}
\papertablesize
\begin{tabular}{lrrrrrrr}
\toprule
 & Academic & Adult & Bank & Credit & MAGIC & CIFAR-10 & FMNIST \\
\midrule
$K^\star$ & 10 & 10 & 10 & 20 & 80 & 20 & 10 \\
\bottomrule
\end{tabular}

\end{papertableblock}

\subsection{Accuracy by Component Count}
\label{app:iid-k-sweeps}

\begin{papertableblock}
\papertablecaption{GCA-GMM accuracy by component count in the IID setting. Values are test accuracy in percent. The two highest unrounded means for each dataset are bold; exact ties are resolved toward smaller $K$.}
\label{tab:iid-gmm-k-sweep}
\papertablesize
\begin{tabular}{lrrrr}
\toprule
Dataset & $K=10$ & $K=20$ & $K=40$ & $K=80$ \\
\midrule
Academic & \textbf{64.92 $\pm$ 0.01} & 64.91 $\pm$ 0.01 & \textbf{64.92 $\pm$ 0.01} & 64.91 $\pm$ 0.01 \\
Adult & \textbf{66.87 $\pm$ 0.02} & \textbf{66.87 $\pm$ 0.02} & 66.87 $\pm$ 0.02 & 66.87 $\pm$ 0.02 \\
Bank & \textbf{51.18 $\pm$ 0.06} & \textbf{51.10 $\pm$ 0.07} & 51.02 $\pm$ 0.03 & 50.95 $\pm$ 0.15 \\
Credit & 82.31 $\pm$ 0.18 & \textbf{82.40 $\pm$ 0.09} & \textbf{82.37 $\pm$ 0.06} & 82.02 $\pm$ 0.04 \\
MAGIC & \textbf{69.90 $\pm$ 0.35} & 69.75 $\pm$ 0.38 & 69.80 $\pm$ 0.28 & \textbf{69.97 $\pm$ 0.18} \\
CIFAR-10 & \textbf{55.39 $\pm$ 5.29} & \textbf{55.62 $\pm$ 5.09} & 54.39 $\pm$ 5.50 & 55.29 $\pm$ 6.05 \\
FMNIST & \textbf{78.61 $\pm$ 5.06} & \textbf{78.49 $\pm$ 6.23} & 77.79 $\pm$ 5.34 & 77.30 $\pm$ 5.29 \\
\bottomrule
\end{tabular}

\end{papertableblock}

\subsection{Alignment-Phase Weighting Ablation}
\label{app:uniform-weight-gmm}

To isolate the effect of inverse-support weighting on the alignment objective, we compare inverse weighting with uniform weighting, where all nonempty centroid weights are set to $w_k=1$. Both configurations use the inverse-weighted model's selected $K^\star$ and otherwise share the same data partitions, latent uploads, GMM fitting procedure, reconstruction--alignment schedule, and optimization settings.

\begin{papertableblock}
\papertablecaption{Peak alignment-epoch accuracy for inverse and uniform weighting at the selected $K^\star$. For each seed, the highest value is selected from the 500 alignment-epoch evaluations over 100 rounds. Values are percentages (mean $\pm$ standard deviation); the difference and relative change compare inverse weighting with uniform weighting.}
\label{tab:iid-gmm-alignment-weighting}
\papertablesize
\begin{tabular}{lrrrrr}
\toprule
Dataset & $K^\star$ & Inverse & Uniform & Difference & Relative change \\
\midrule
Academic & 10 & 65.29 $\pm$ 0.39 & 65.12 $\pm$ 0.42 & +0.18 pp & +0.27\% \\
Adult & 10 & 68.80 $\pm$ 0.92 & 69.20 $\pm$ 0.57 & -0.40 pp & -0.57\% \\
Bank & 10 & 56.81 $\pm$ 2.29 & 56.48 $\pm$ 1.36 & +0.34 pp & +0.59\% \\
Credit & 20 & 86.25 $\pm$ 0.74 & 84.89 $\pm$ 0.23 & +1.36 pp & +1.60\% \\
MAGIC & 80 & 72.93 $\pm$ 1.72 & 74.21 $\pm$ 2.35 & -1.28 pp & -1.72\% \\
CIFAR-10 & 20 & 67.40 $\pm$ 10.19 & 65.15 $\pm$ 8.30 & +2.25 pp & +3.45\% \\
FMNIST & 10 & 81.97 $\pm$ 5.67 & 81.82 $\pm$ 5.83 & +0.15 pp & +0.18\% \\
\bottomrule
\end{tabular}

\end{papertableblock}

Because centroid weights directly affect the alignment objective, we evaluate their contribution using alignment-epoch accuracy for one fixed client, used consistently across each paired inverse- and uniform-weighting configuration. This diagnostic is separate from the primary all-client reconstruction-accuracy evaluation.

Inverse weighting achieves a higher mean accuracy on five of seven datasets and in 14 of 21 seed-level comparisons. Its unweighted average across the seven datasets is $71.35\%$, compared with $70.98\%$ for uniform weighting, corresponding to an improvement of $0.37$ percentage points. Relative changes range from $-1.72\%$ to $+3.45\%$ across all datasets and from $+0.18\%$ to $+3.45\%$ among the five datasets favoring inverse weighting. These results indicate that inverse-support weighting improves peak alignment performance overall, although uniform weighting performs better on Adult and MAGIC.

\subsection{IID Baseline Comparison}
\label{app:iid-baseline-results}

\begin{papertableblock}
\papertablecaption{Mean $\pm$ standard deviation for the IID comparison. The two highest decentralized testing accuracies for each dataset are bold; Single-Client and Centralized (in italics) are excluded from this ranking.}
\label{tab:iid-expanded-baselines}
\papertablesize
\begin{tabular}{lrrrrrrr}
\toprule
Dataset & GCA & \textit{Single} & FedAvg & FedProx & FedNova & DP-FedAvg & \textit{Centralized} \\
\midrule
Academic & \textbf{64.92 $\pm$ 0.01} & \textit{64.54 $\pm$ 0.03} & 64.88 $\pm$ 0.02 & 64.90 $\pm$ 0.02 & 64.88 $\pm$ 0.02 & \textbf{65.86 $\pm$ 0.27} & \textit{65.95 $\pm$ 0.49} \\
Adult & \textbf{66.87 $\pm$ 0.02} & \textit{63.25 $\pm$ 0.24} & 65.84 $\pm$ 0.16 & 66.07 $\pm$ 0.03 & 65.84 $\pm$ 0.16 & \textbf{66.82 $\pm$ 0.08} & \textit{69.47 $\pm$ 0.28} \\
Bank & 51.18 $\pm$ 0.06 & \textit{50.74 $\pm$ 0.06} & 51.65 $\pm$ 0.52 & 51.29 $\pm$ 0.14 & \textbf{51.71 $\pm$ 0.50} & \textbf{51.92 $\pm$ 0.22} & \textit{52.83 $\pm$ 0.30} \\
Credit & \textbf{82.40 $\pm$ 0.09} & \textit{80.06 $\pm$ 0.18} & 82.10 $\pm$ 0.85 & 81.39 $\pm$ 0.01 & \textbf{82.22 $\pm$ 1.07} & 81.00 $\pm$ 0.32 & \textit{84.08 $\pm$ 1.12} \\
MAGIC & \textbf{69.97 $\pm$ 0.18} & \textit{68.75 $\pm$ 0.06} & 69.86 $\pm$ 0.73 & 69.86 $\pm$ 0.65 & \textbf{69.94 $\pm$ 0.75} & 69.88 $\pm$ 0.98 & \textit{73.36 $\pm$ 0.18} \\
CIFAR-10 & 55.62 $\pm$ 5.09 & \textit{54.09 $\pm$ 7.72} & 55.86 $\pm$ 6.20 & \textbf{56.56 $\pm$ 6.37} & 55.94 $\pm$ 6.44 & \textbf{63.71 $\pm$ 11.41} & \textit{53.72 $\pm$ 9.86} \\
FMNIST & \textbf{78.61 $\pm$ 5.06} & \textit{75.02 $\pm$ 5.42} & 74.33 $\pm$ 6.54 & 74.51 $\pm$ 6.99 & 74.32 $\pm$ 6.53 & \textbf{77.40 $\pm$ 2.60} & \textit{75.03 $\pm$ 4.50} \\
\bottomrule
\end{tabular}

\end{papertableblock}

GCA improves on Single-Client for all seven dataset means and on FedAvg for five of seven. Against FedProx, FedNova, and DP-FedAvg it records 5/2, 5/2, and 4/3 win/loss counts, respectively. The corresponding mixture of outcomes supports comparable decentralized accuracy rather than uniform dominance.

\subsection{Clustering-Backend Ablations}
\label{app:clustering-backend-ablations}
To evaluate sensitivity to the clustering backend, we repeat the IID component-count sweep using K-means, soft K-means, and mini-batch K-means. All other data, training, weighting, and reporting settings match the GCA-GMM evaluation.

\begin{papertableblock}
\papertablecaption{GCA--K-means accuracy by component count in the IID setting. Values are test accuracy in percent (mean $\pm$ standard deviation over three seeds). The two highest unrounded means for each dataset are bold; exact ties are resolved toward smaller $K$.}
\label{tab:iid-kmeans-k-sweep}
\papertablesize
\begin{tabular}{lrrrr}
\toprule
Dataset & $K=10$ & $K=20$ & $K=40$ & $K=80$ \\
\midrule
Academic & 64.91 $\pm$ 0.01 & 64.91 $\pm$ 0.01 & \textbf{64.92 $\pm$ 0.00} & \textbf{64.92 $\pm$ 0.00} \\
Adult & \textbf{66.87 $\pm$ 0.02} & \textbf{66.87 $\pm$ 0.02} & 66.87 $\pm$ 0.02 & 66.87 $\pm$ 0.02 \\
Bank & \textbf{51.13 $\pm$ 0.15} & \textbf{51.12 $\pm$ 0.03} & 50.91 $\pm$ 0.04 & 50.85 $\pm$ 0.15 \\
Credit & \textbf{82.56 $\pm$ 0.10} & 82.47 $\pm$ 0.27 & \textbf{82.48 $\pm$ 0.10} & 82.42 $\pm$ 0.20 \\
MAGIC & \textbf{70.07 $\pm$ 0.33} & 69.93 $\pm$ 0.37 & 69.74 $\pm$ 0.35 & \textbf{69.95 $\pm$ 0.28} \\
CIFAR-10 & \textbf{55.08 $\pm$ 5.82} & 55.04 $\pm$ 6.36 & \textbf{55.37 $\pm$ 6.53} & 55.05 $\pm$ 6.27 \\
FMNIST & \textbf{79.29 $\pm$ 5.05} & \textbf{79.29 $\pm$ 5.53} & 77.85 $\pm$ 4.93 & 77.43 $\pm$ 4.94 \\
\bottomrule
\end{tabular}

\end{papertableblock}

\begin{papertableblock}
\papertablecaption{GCA--Soft-K-means accuracy by component count in the IID setting. Values are test accuracy in percent (mean $\pm$ standard deviation over three seeds). The two highest unrounded means for each dataset are bold; exact ties are resolved toward smaller $K$.}
\label{tab:iid-soft-kmeans-k-sweep}
\papertablesize
\begin{tabular}{lrrrr}
\toprule
Dataset & $K=10$ & $K=20$ & $K=40$ & $K=80$ \\
\midrule
Academic & 64.91 $\pm$ 0.01 & 64.91 $\pm$ 0.02 & \textbf{64.91 $\pm$ 0.01} & \textbf{64.91 $\pm$ 0.01} \\
Adult & \textbf{66.87 $\pm$ 0.02} & \textbf{66.87 $\pm$ 0.02} & 66.87 $\pm$ 0.02 & 66.87 $\pm$ 0.02 \\
Bank & \textbf{51.16 $\pm$ 0.09} & 50.97 $\pm$ 0.16 & \textbf{51.06 $\pm$ 0.09} & 50.86 $\pm$ 0.03 \\
Credit & \textbf{82.53 $\pm$ 0.18} & 82.42 $\pm$ 0.06 & \textbf{82.43 $\pm$ 0.28} & 82.23 $\pm$ 0.02 \\
MAGIC & \textbf{70.17 $\pm$ 0.40} & \textbf{69.89 $\pm$ 0.24} & 69.80 $\pm$ 0.36 & 69.88 $\pm$ 0.35 \\
CIFAR-10 & \textbf{55.24 $\pm$ 6.37} & 54.96 $\pm$ 6.02 & \textbf{54.97 $\pm$ 6.31} & 54.51 $\pm$ 5.56 \\
FMNIST & \textbf{79.38 $\pm$ 4.86} & \textbf{79.24 $\pm$ 5.48} & 77.66 $\pm$ 5.10 & 77.26 $\pm$ 5.20 \\
\bottomrule
\end{tabular}

\end{papertableblock}

\begin{papertableblock}
\papertablecaption{GCA--MiniBatch-K-means accuracy by component count in the IID setting. Values are test accuracy in percent (mean $\pm$ standard deviation over three seeds). The two highest unrounded means for each dataset are bold; exact ties are resolved toward smaller $K$.}
\label{tab:iid-minibatch-kmeans-k-sweep}
\papertablesize
\begin{tabular}{lrrrr}
\toprule
Dataset & $K=10$ & $K=20$ & $K=40$ & $K=80$ \\
\midrule
Academic & \textbf{64.92 $\pm$ 0.01} & 64.90 $\pm$ 0.02 & \textbf{64.92 $\pm$ 0.03} & 64.92 $\pm$ 0.01 \\
Adult & \textbf{66.87 $\pm$ 0.02} & \textbf{66.87 $\pm$ 0.02} & 66.87 $\pm$ 0.02 & 66.87 $\pm$ 0.02 \\
Bank & 51.07 $\pm$ 0.12 & \textbf{51.16 $\pm$ 0.34} & \textbf{51.11 $\pm$ 0.03} & 51.04 $\pm$ 0.14 \\
Credit & \textbf{82.45 $\pm$ 0.19} & \textbf{82.43 $\pm$ 0.19} & 82.37 $\pm$ 0.18 & 82.34 $\pm$ 0.07 \\
MAGIC & \textbf{70.36 $\pm$ 0.43} & \textbf{70.40 $\pm$ 0.55} & 70.23 $\pm$ 0.52 & 69.76 $\pm$ 0.36 \\
CIFAR-10 & 54.77 $\pm$ 5.66 & 55.31 $\pm$ 6.28 & \textbf{55.39 $\pm$ 5.42} & \textbf{55.52 $\pm$ 6.72} \\
FMNIST & \textbf{78.83 $\pm$ 4.45} & \textbf{78.32 $\pm$ 6.18} & 77.63 $\pm$ 5.04 & 77.40 $\pm$ 4.82 \\
\bottomrule
\end{tabular}

\end{papertableblock}
Accuracy varies only modestly across the evaluated component counts and
clustering backends. This indicates limited sensitivity to the specific
clustering implementation rather than uniform superiority of GMM.
We use GMM as the primary backend because its soft responsibilities
provide effective support counts directly for inverse-support weighting.

\newpage
\subsection{Server-Side Extraction Results}
\label{app:attack-results}

Let $\mathcal{T}_i$ be the complete training shard of the one evaluated
client and let $\mathcal{R}_i=\{\hat{\mathbf{x}}_j\}_{j=1}^{100}$ be its
100 reconstructed attack outputs. For every target
$\mathbf{x}\in\mathcal{T}_i$, we first select the output with minimum
MSE. Cosine similarity is then evaluated against that same selected
output, rather than choosing a different output that maximizes cosine.
This target-oriented metric measures how well the reconstructed output bank covers the complete client shard. The same reconstructed output may be nearest to multiple training targets, so the metric is not a one-to-one record-recovery measure.

A fixed reference bank $\mathcal{H}_i$ containing 100 normal records excluded from training provides the dataset- and client-specific baseline.
Let
$\overline{\mathrm{MSE}}(\mathcal A,\mathcal B)$ denote the mean, over records in $\mathcal A$, of MSE to the nearest record in $\mathcal B$. Let $\overline{\cos}_{\mathrm{MSE}}(\mathcal A,\mathcal B)$ denote mean cosine similarity to those same MSE-selected records. The two reported leakage metrics are
\[
\mathrm{NTMSE}=
\frac{\overline{\mathrm{MSE}}(\mathcal{T}_i,\mathcal{R}_i)}
{\overline{\mathrm{MSE}}(\mathcal{T}_i,\mathcal{H}_i)},\qquad
\Delta\cos=
\overline{\cos}_{\mathrm{MSE}}(\mathcal{T}_i,\mathcal{R}_i)-
\overline{\cos}_{\mathrm{MSE}}(\mathcal{T}_i,\mathcal{H}_i).
\]
NTMSE (normalized target MSE) is a distance ratio, so values above one indicate that reconstructed outputs are farther from training targets than the held-out reference bank. Excess cosine, $\Delta\cos$, is the corresponding similarity difference; negative values mean the reconstructed outputs have lower cosine similarity to the targets than the held-out bank. Consequently, higher NTMSE and lower $\Delta\cos$ indicate less leakage. We average records within the evaluated client and then report the mean and standard deviation over the three dataset seeds.

\begin{papertableblock}
\papertablecaption{Per-dataset extraction results for one evaluated client per seed (mean $\pm$ standard deviation over three seeds).
For each training record, the nearest of 100 reconstructed outputs is selected by MSE, and cosine similarity uses that same output. Higher NTMSE and lower excess cosine similarity ($\Delta\cos$) indicate lower target resemblance under the evaluated attack; bold marks the lowest-target-resemblance result per dataset and metric.}
\label{tab:expanded-attack-primary}
\papertablesize
\begin{tabular}{@{}llrr@{}}
\toprule
Dataset & Method & NTMSE $\uparrow$ & $\Delta$cos $\downarrow$ \\
\midrule
Academic & GCA-GMM & 1.838$\pm$0.138 & -0.563$\pm$0.131 \\
Academic & FedAvg & 0.921$\pm$0.027 & -0.007$\pm$0.009 \\
Academic & FedProx & 0.958$\pm$0.117 & -0.029$\pm$0.019 \\
Academic & FedNova & 0.915$\pm$0.035 & -0.011$\pm$0.005 \\
Academic & DP-FedAvg & \textbf{276.171$\pm$182.022} & \textbf{-0.724$\pm$0.003} \\
\addlinespace
Adult & GCA-GMM & \textbf{2.216$\pm$0.140} & \textbf{-0.702$\pm$0.056} \\
Adult & FedAvg & 1.298$\pm$0.045 & -0.098$\pm$0.006 \\
Adult & FedProx & 1.291$\pm$0.026 & -0.096$\pm$0.004 \\
Adult & FedNova & 1.265$\pm$0.025 & -0.092$\pm$0.005 \\
Adult & DP-FedAvg & 2.060$\pm$0.133 & -0.563$\pm$0.160 \\
\addlinespace
Bank & GCA-GMM & \textbf{1.584$\pm$0.065} & \textbf{-0.612$\pm$0.019} \\
Bank & FedAvg & 0.935$\pm$0.037 & 0.029$\pm$0.011 \\
Bank & FedProx & 0.916$\pm$0.063 & 0.028$\pm$0.026 \\
Bank & FedNova & 0.955$\pm$0.065 & 0.025$\pm$0.018 \\
Bank & DP-FedAvg & 1.149$\pm$0.175 & -0.301$\pm$0.225 \\
\addlinespace
Credit & GCA-GMM & \textbf{1.352$\pm$0.021} & -0.559$\pm$0.055 \\
Credit & FedAvg & 1.103$\pm$0.017 & -0.056$\pm$0.005 \\
Credit & FedProx & 1.346$\pm$0.029 & \textbf{-0.584$\pm$0.009} \\
Credit & FedNova & 1.109$\pm$0.020 & -0.059$\pm$0.011 \\
Credit & DP-FedAvg & 1.100$\pm$0.019 & -0.048$\pm$0.010 \\
\addlinespace
MAGIC & GCA-GMM & 4.931$\pm$1.023 & -0.493$\pm$0.202 \\
MAGIC & FedAvg & 1.688$\pm$0.087 & -0.033$\pm$0.009 \\
MAGIC & FedProx & 1.877$\pm$0.203 & -0.047$\pm$0.031 \\
MAGIC & FedNova & 1.741$\pm$0.170 & -0.048$\pm$0.030 \\
MAGIC & DP-FedAvg & \textbf{6.047$\pm$0.373} & \textbf{-0.919$\pm$0.091} \\
\addlinespace
CIFAR-10 & GCA-GMM & 2.139$\pm$0.798 & \textbf{-0.540$\pm$0.169} \\
CIFAR-10 & FedAvg & 1.584$\pm$0.233 & -0.457$\pm$0.047 \\
CIFAR-10 & FedProx & 1.600$\pm$0.204 & -0.454$\pm$0.035 \\
CIFAR-10 & FedNova & 1.586$\pm$0.211 & -0.438$\pm$0.048 \\
CIFAR-10 & DP-FedAvg & \textbf{5.502$\pm$1.130} & -0.318$\pm$0.104 \\
\addlinespace
FMNIST & GCA-GMM & \textbf{13.940$\pm$1.236} & \textbf{-1.003$\pm$0.101} \\
FMNIST & FedAvg & 5.007$\pm$0.404 & -0.674$\pm$0.019 \\
FMNIST & FedProx & 5.570$\pm$0.562 & -0.733$\pm$0.054 \\
FMNIST & FedNova & 4.987$\pm$0.402 & -0.663$\pm$0.011 \\
FMNIST & DP-FedAvg & 9.573$\pm$1.488 & -0.647$\pm$0.106 \\
\bottomrule
\end{tabular}

\end{papertableblock}

GCA-GMM has higher NTMSE than FedAvg, FedProx, and FedNova in all 21 comparisons. Its excess cosine similarity is lower in 20 of 21: all seven datasets against FedAvg and FedNova, and six of seven against FedProx. Against DP-FedAvg, GCA-GMM has higher NTMSE on four datasets and lower excess cosine similarity on five. The two metrics therefore consistently favor GCA-GMM over FedAvg, FedProx, and FedNova, while the comparison with DP-FedAvg is mixed.

\subsection{Deterministic Communication per Round}
\label{app:efficiency-results}

\begin{papertableblock}
\papertablecaption{Inputs to the deterministic communication accounting. $C$ is the client count, $U$ is the total uploaded codes per round, $D_z$ is the flattened latent dimension, and $P$ is the autoencoder parameter count. Payload columns are MiB per round; FL total applies identically to FedAvg, FedProx, FedNova, and DP-FedAvg.}
\label{tab:communication-inputs}
\papertablesize
\begin{tabular}{lrrrrrr}
\toprule
Dataset & $C$ & $U$ & $D_z$ & $P$ & GCA up & FL total \\
\midrule
Academic & 10 & 390 & 16 & 46,758 & 0.024 & 3.567 \\
Adult & 20 & 2,460 & 16 & 42,654 & 0.150 & 6.508 \\
Bank & 20 & 3,960 & 16 & 43,167 & 0.242 & 6.587 \\
Credit & 50 & 28,350 & 16 & 50,349 & 1.730 & 19.207 \\
MAGIC & 10 & 1,410 & 16 & 38,037 & 0.086 & 2.902 \\
CIFAR-10 & 20 & 500 & 1,024 & 339,219 & 1.953 & 51.761 \\
FMNIST & 20 & 600 & 784 & 334,609 & 1.794 & 51.057 \\
\bottomrule
\end{tabular}

\end{papertableblock}

\begin{papertableblock}
\papertablecaption{Deterministic total communication per round in MiB. GCA columns include the latent-code uplink and the centroid-and-count broadcast to every client. FL denotes the identical full-model upload-and-broadcast payload of FedAvg, FedProx, FedNova, and DP-FedAvg.}
\label{tab:communication-per-round}
\papertablesize
\begin{tabular}{lrrrrr}
\toprule
Dataset & GCA $K=10$ & GCA $K=20$ & GCA $K=40$ & GCA $K=80$ & FL \\
\midrule
Academic & 0.030 & 0.037 & 0.050 & 0.076 & 3.567 \\
Adult & 0.163 & 0.176 & 0.202 & 0.254 & 6.508 \\
Bank & 0.255 & 0.268 & 0.294 & 0.345 & 6.587 \\
Credit & 1.763 & 1.795 & 1.860 & 1.990 & 19.207 \\
MAGIC & 0.093 & 0.099 & 0.112 & 0.138 & 2.902 \\
CIFAR-10 & 2.735 & 3.517 & 5.081 & 8.209 & 51.761 \\
FMNIST & 2.393 & 2.992 & 4.190 & 6.586 & 51.057 \\
\bottomrule
\end{tabular}

\end{papertableblock}

For all evaluated datasets and all four tested component counts, GCA transmits less data per round than the parameter-sharing baselines. Across the tabular datasets, the reduction ranges from 89.64\% (Credit, $K=80$) to 99.15\% (Academic, $K=10$). With the two-downsampling vision architecture, CIFAR-10 uses 2.735--8.209 MiB per GCA round versus 51.761 MiB for FL, and Fashion-MNIST uses 2.393--6.586 MiB versus 51.057 MiB. Even at $K=80$, these are reductions of 84.14\% and 87.10\%, respectively.

\subsection{Non-IID Results}
\label{app:non-iid-results}

The non-IID experiments introduce both feature-distribution and sample-count skew. Samples are grouped using their feature descriptors, and a Dirichlet distribution with $\alpha=0.1$ assigns each feature group concentrated client preferences. Long-tail client quotas additionally impose a target 10:1 maximum-to-minimum sample-count ratio. Clients therefore observe different feature distributions and unequal quantities of normal training data; the complete partition protocol is given in Appendix~\ref{app:data-partitioning}.

\begin{papertableblock}
\papertablecaption{GCA-GMM $K$ sweep in the non-IID setting. Values are test accuracy in percent.}
\label{tab:noniid-gmm-k-sweep}
\papertablesize
\begin{tabular}{lrrrr}
\toprule
Dataset & $K=10$ & $K=20$ & $K=40$ & $K=80$ \\
\midrule
Academic & 63.36 $\pm$ 0.76 & 63.36 $\pm$ 0.76 & 63.36 $\pm$ 0.76 & 63.36 $\pm$ 0.76 \\
Adult & 43.41 $\pm$ 2.69 & 43.41 $\pm$ 2.69 & 43.41 $\pm$ 2.69 & 43.41 $\pm$ 2.69 \\
Bank & 50.46 $\pm$ 0.23 & 50.46 $\pm$ 0.23 & 50.46 $\pm$ 0.23 & 50.46 $\pm$ 0.23 \\
Credit & 62.33 $\pm$ 0.76 & 62.33 $\pm$ 0.76 & 62.33 $\pm$ 0.76 & 62.33 $\pm$ 0.76 \\
MAGIC & 61.75 $\pm$ 0.58 & 61.75 $\pm$ 0.58 & 61.75 $\pm$ 0.58 & 61.75 $\pm$ 0.58 \\
\bottomrule
\end{tabular}

\end{papertableblock}

\begin{papertableblock}
\papertablecaption{Selected GCA-GMM ($K^\star=10$) and FL baselines in the non-IID setting. Values are test accuracy in percent.}
\label{tab:noniid-results}
\papertablesize
\begin{tabular}{lrrrrr}
\toprule
Dataset & GCA & FedAvg & FedProx & FedNova & DP-FedAvg \\
\midrule
Academic & 63.36 $\pm$ 0.76 & 62.24 $\pm$ 1.20 & 62.26 $\pm$ 1.20 & 62.23 $\pm$ 1.17 & 63.88 $\pm$ 0.40 \\
Adult & 43.41 $\pm$ 2.69 & 38.88 $\pm$ 0.87 & 39.26 $\pm$ 1.51 & 38.88 $\pm$ 0.87 & 39.66 $\pm$ 1.26 \\
Bank & 50.46 $\pm$ 0.23 & 50.39 $\pm$ 0.28 & 50.42 $\pm$ 0.25 & 50.40 $\pm$ 0.29 & 50.47 $\pm$ 0.13 \\
Credit & 62.33 $\pm$ 0.76 & 61.19 $\pm$ 0.89 & 60.96 $\pm$ 0.74 & 61.19 $\pm$ 0.89 & 61.19 $\pm$ 0.89 \\
MAGIC & 61.75 $\pm$ 0.58 & 62.39 $\pm$ 0.45 & 62.45 $\pm$ 0.69 & 61.68 $\pm$ 1.03 & 59.01 $\pm$ 1.71 \\
\bottomrule
\end{tabular}

\end{papertableblock}

All tested $K$ values attain the same reported reconstruction-epoch peak, so the stated tie rule selects $K^\star=10$. Compared with FedAvg, GCA improves Academic, Adult, Bank, and Credit and trails on MAGIC, with per-dataset relative changes from $-1.03\%$ to $11.66\%$.
The reported $2.26\%$ is the relative difference between the five-dataset macro-averages:
\[
\frac{
\overline{A}_{\mathrm{GCA}}
-
\overline{A}_{\mathrm{FedAvg}}
}{
\overline{A}_{\mathrm{FedAvg}}
}
\times100.
\]

\stopcontents[appendix]

\end{document}